  \providecommand\BibTeX{{%
    \normalfont B\kern-0.5em{\scshape i\kern-0.25em b}\kern-0.8em\TeX}}}
\definecolor{LightGray}{gray}{0.9}
\newtheorem{theorem}{Theorem}[section]
\newtheorem{definition}{Definition}[section]
\begin{document}

\title{$\kappa$HGCN:Tree-likeness Modeling via Continuous and Discrete Curvature Learning}


\author{Menglin Yang}
\authornotemark
\affiliation{%
  \institution{The Chinese University of Hong Kong}
  \city{Hong Kong SAR}
  \country{China}
}

\author{Min Zhou}
\affiliation{
  \institution{Huawei Technologies Co., Ltd.}
  \city{Shenzhen}
  \country{China}
}
\authornote{Corresponding authors: \\ ~~ Menglin Yang (mlyang.cuhk@outlook.com); \\ ~~ Min Zhou (zhoumin27@huawei.com).}

\author{Lujia Pan}
\affiliation{%
  \institution{Huawei Technologies Co., Ltd.}
  \city{Shenzhen}
  \country{China}
}

\author{Irwin King}
\affiliation{%
  \institution{The Chinese University of Hong Kong}
  \city{Hong Kong SAR}
  \country{China}
}

\renewcommand{\shortauthors}{Menglin Yang, Min Zhou, Lujia Pan, \& Irwin King}

\newcommand{\method}{{$\kappa$}HGCN\xspace}
\newcommand{\aggmethod}{HMP\xspace}
\newcommand{\hr}{$\kappa$HC\xspace}
\newcommand{\fig}{\figureautorefname}
\newcommand{\tab}{TABLE\xspace}
\begin{abstract}
The prevalence of tree-like structures, encompassing hierarchical structures and power law distributions, exists extensively in real-world applications, including recommendation systems, ecosystems, financial networks, social networks, etc. Recently, the exploitation of hyperbolic space for tree-likeness modeling has garnered considerable attention owing to its exponential growth volume. Compared to the flat Euclidean space, the curved hyperbolic space provides a more amenable and embeddable room, especially for datasets exhibiting implicit tree-like architectures. 
However, the intricate nature of real-world tree-like data presents a considerable challenge, as it frequently displays a \textit{heterogeneous} composition of tree-like, flat, and circular regions. The direct embedding of such heterogeneous structures into a homogeneous embedding space (i.e., hyperbolic space) inevitably leads to heavy distortions.
To mitigate the aforementioned shortage, this study endeavors to explore the curvature between discrete structure and continuous learning space, aiming at encoding the message conveyed by the network topology in the learning process, thereby improving tree-likeness modeling. 
To the end, a curvature-aware hyperbolic graph convolutional neural network, \method, is proposed, which utilizes the curvature to guide message passing and improve long-range propagation.
Extensive experiments on node classification and link prediction tasks verify the superiority of the proposal as it consistently outperforms various competitive models by a large margin.
\end{abstract}

\begin{CCSXML}
<ccs2012>
   <concept>
       <concept_id>10002950.10003741.10003742.10003745</concept_id>
       <concept_desc>Mathematics of computing~Geometric topology</concept_desc>
       <concept_significance>100</concept_significance>
       </concept>
   <concept>
       <concept_id>10002950.10003624.10003633.10010917</concept_id>
       <concept_desc>Mathematics of computing~Graph algorithms</concept_desc>
       <concept_significance>500</concept_significance>
       </concept>
 </ccs2012>
\end{CCSXML}

\ccsdesc[100]{Mathematics of computing~Geometric topology}
\ccsdesc[500]{Mathematics of computing~Graph algorithms}
\keywords{Ricci curvature, tree graph, hyperbolic space, graph learning}



\maketitle

\section{Introduction}
	\begin{figure}[!tp]
    \centering
    \includegraphics[width=0.42\textwidth]{./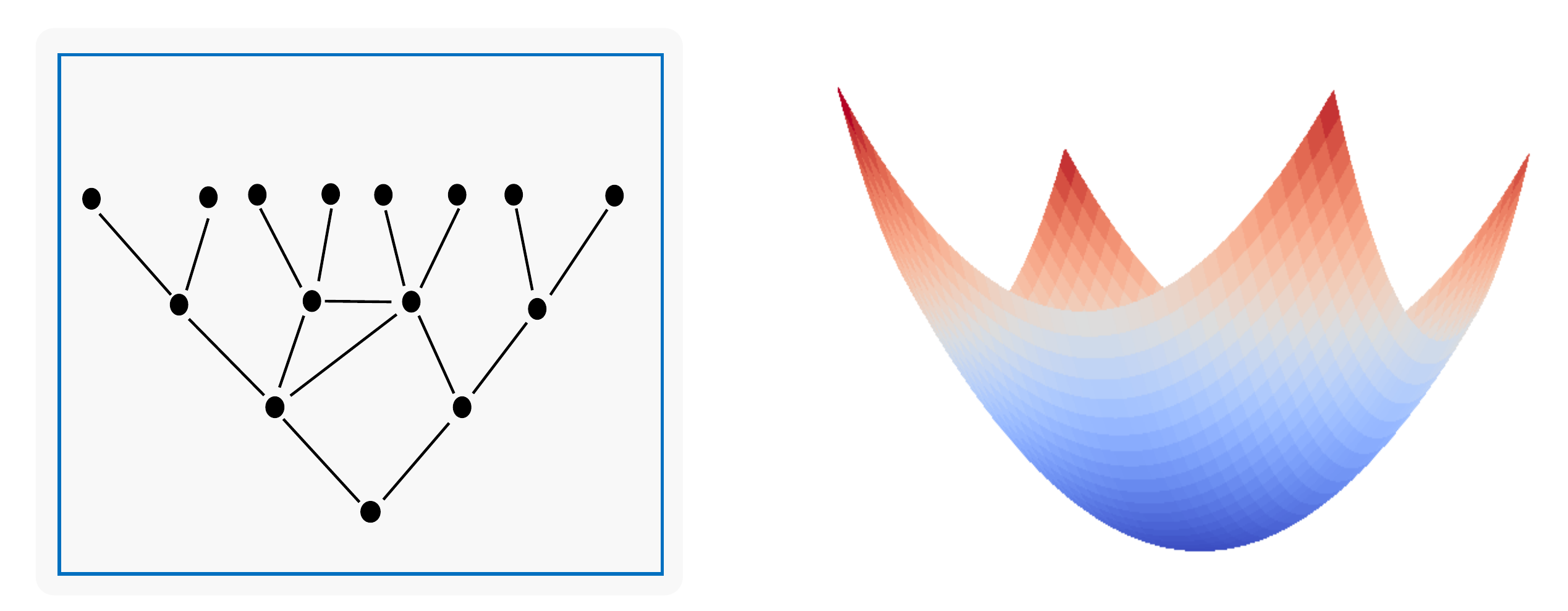}
    \caption{The (tree-like) graph $G$ in the left subfigure can be considered as a discrete approximation to the hyperbolic manifold $\mathcal{M}$ in the right subfigure; on the other hand, the hyperbolic manifold $\mathcal{M}$ can also be approximated as the graph $G$.}
    \label{fig:tree-likeness}
    \vspace{-15pt}
    \end{figure}
Tree-like structures refer to networks, systems, or data organizations that resemble a tree in their architecture, with nodes branching out from the root into multiple levels. They are widely  observed in various real-world domains~\cite{adcock2013tree,abu2016metric,narayan2011large,shavitt2008hyperbolic,zhou2022telegraph,liu2022discovering}, such as recommendation systems, financial networks, and social networks. 

Recently, the utilization of hyperbolic space for modeling tree-like structures has garnered substantial attention ~\cite{nickel2017poincare,nickel2018learning,hgcn2019,liu2019HGNN,zhang2021hyperbolic,HNN,HNN++,yang2022hyperbolic,peng2021hyperbolic}. Compared with the zero curvature Euclidean space, one key property of negative curvature hyperbolic space is that it expand exponentially, making it can be considered as a continuous tree and vice versa (as shown in Figure~\ref{fig:tree-likeness}).  
In other words, hyperbolic space allows for the efficient representation of a tree-like structure, as it enables nodes to be spread apart as they move away from the root, preventing crowding and overlapping of nodes as is commonly observed in Euclidean space. Additionally, hyperbolic space allows for exponential growth in the number of nodes in a given area, which is well-suited for modeling the exponential growth of trees. 

However, the real-world dataset often deviates from a pure tree configuration  and manifests in a labyrinthine complexity, posing large challenges for tree-likeness modeling within hyperbolic space
~\cite{zhu2020gil,wang2021mixed}.
For instance, biological taxonomies, which depict the hierarchical relationships between different species from a \textit{global} view, often exhibit both extensive \textit{local} flat regions where multiple species are closely related and \textit{local} circular regions where species connections are less defined.
The local structure of a tree-like graph exhibits a heterogeneous blend of tree-like, flat, and circular patterns, resulting in difficulties in uniformly embedding the data into a homogeneous embedding space and thereby engendering structural biases and distortions.

\begin{figure}[!tp]
\centering
\includegraphics[width=0.47\textwidth]{./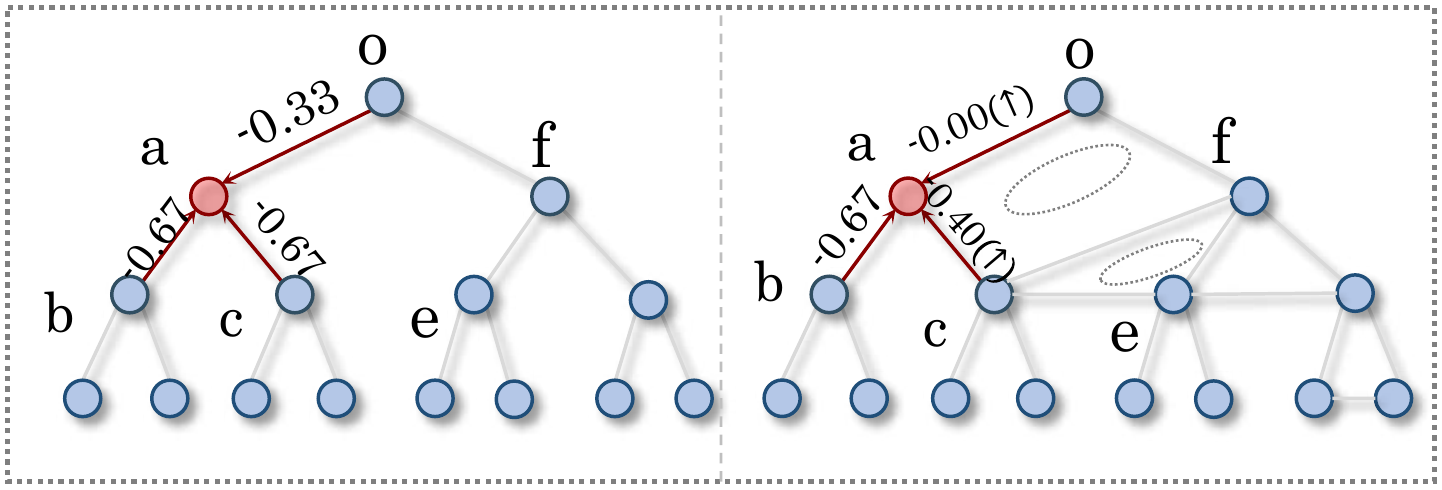}
\caption{Illustration of two tree-like graphs: $T_1$ (left) is pure and $T_2$ (right) has a local heterogeneous structure. The arrows indicate the aggregation flow and the values on the edges represent the edge curvature, denoted as $\kappa^1$ for $T_1$ and $\kappa^2$ for $T_2$. By incorporating curvature $\kappa$ into the neighboring aggregation, the detection of local structures is improved. This is achieved by assigning asymmetric weights ($\kappa_{a,o}^1$ vs $\kappa_{a,b}^1$) to nodes at different levels and results in larger values for nodes in a circular shape ($\kappa_{a,o}^1$ vs $\kappa_{a,o}^2$ and $\kappa_{a,c}^1$ vs $\kappa_{a,c}^2$).
}
\label{fig:manfiold_curvature}
\end{figure}
    
To mitigate the limitations, this study seeks to examine the intersection between the discrete structure of the data and the continuous learning space. 
The aim is to encode the information inherent in the network topology in a manner that is both effective and imbued with structural inductive bias, thereby enhancing the performance of downstream tasks.
From a geometric perspective, the quality of the embedding in geometric learning depends on the compatibility between the intrinsic graph structure and the embedding space~\cite{gu2019learning}. In light of this principle, we employs the concept of \emph{curvature} to guide tree-likeness modeling in hyperbolic space.
As shown in Figure~\ref{fig:manfiold_curvature}, the incorporation of curvature information offers a more comprehensive grasp of the local shape characteristics, facilitating the representation of the shape and contours of diverse regions within the learning space. 
    
In Riemannian geometry, curvature measures the deviation of a geometric object from being flat, originally defined on continuous smooth manifolds~\cite{lee2018introduction}. Smooth manifolds with positive, zero, or negative curvature are spherical, Euclidean, and hyperbolic spaces, respectively. This concept has been extended to discrete objects such as graphs, where curvature describes the deviation of a local pair of neighborhoods from a "flat" case.

Graph curvature, analogous to curvature in the realm of continuous geometry, consists of Gaussian curvature, Ricci curvature, and mean curvature. These components have unique roles: Gaussian curvature quantifies the local curvature at vertices, Ricci curvature allocates curvature to the edges, and mean curvature provides an overall metric for the entire graph. In this work, we focus on Ricci curvature for graph convolution and edge-based filtering. Several definitions have been proposed about Ricci curvature, including Ollivier Ricci curvature~\cite{ollivier2009ricci}, Forman Ricci curvature~\cite{forman2003bochner}, Balanced Forman curvature~\cite{topping2021understanding}.
Ricci curvature controls the overlap between two distance balls by considering the radii of the balls and the distance between their centers~\cite{jost2014Ricci_triangles}. Furthermore, the lower bound of Ricci curvature can reveal valuable global geometric and topological information~\cite{Myers_theorem}. It is also an effective indicator of tree-like, flat, and cyclic areas, making it well-suited for integration into hyperbolic space to capture asymmetries, biases, and hierarchies.

Overall, we put forward a novel framework: curvature-aware hyperbolic graph convolutional neural network (\method) for effectively modeling tree-like datasets with complex structures. Specifically, \method leverages the discrete Ricci curvature to guide message passing and dynamically adapts the global continuous hyperbolic curvature. 
Through empirical evaluations on diverse datasets and tasks, we confirm the superiority of the \method, as it consistently outperforms existing baselines by substantial margins. The major contributions of this work are summarized as follows:

\begin{itemize}
    \item We design a novel hyperbolic geometric learning framework that encapsulates the graph Ricci curvature into the continuous embedding space, producing less distortion, powerful expressiveness, and topology-aware embeddings;
    \item We present a new message technique for hyperbolic graph embedding, and we further prove that it produces a smaller (larger) embedding distance when larger (smaller) curvature is involved, which well handles the inconsistency between the local structure and global curvature of embedding space;
    \item Extensive experiments demonstrate that the proposed model \method achieves significant improvements over various baselines on link prediction and node classification tasks.
\end{itemize}

    \label{context:background}

\section{Related Work}
For tree-likeness modeling, we mainly review the latest research techniques including graph neural networks and hyperbolic geometry. In addition to this, we also review the recent advancements in curvature and curvature-based learning.

\subsection{Graph Neural Networks}
Tree-structured data can often be represented as graphs. In recent years, graph neural networks (GNNs) have gained significant attention within the graph learning community. The main concept behind GNNs is a message-passing mechanism that aggregates information from neighboring nodes. GNNs have demonstrated remarkable performance in various tasks, including node classification, link prediction, graph classifications, and graph reconstruction~\cite{gcn2017,graphsage,gat2018,klicpera2018predict,ma2019memory,morris2019weisfeiler,zhang2022costa,zhang2018link,zhang2022graph,FeatureNorm2020,song2022towards,song2021semi,li2022bsal,zhang2022spectral,liu2022cspm,fu2020magnn}. They have also found wide applications in recommender systems, anomaly detection, social networks analysis, and more \cite{chen2021attentive,chen2023bipartite,chen2022learning,chen2022modeling,chen2023wsfe,dou2020enhancing,ma2021comprehensive,gcn2017,song2022towards,song2022hierarchical,song2021graph,yang2022htgn}. The majority of GNNs learn graphs in Euclidean space due to their computational advantages and intuitiveness.
However, Euclidean models are limited in their ability to represent complex patterns in graph~\cite{non_linear_embedding}.

\subsection{Hyperbolic Geometry}
Hyperbolic representation learning has recently garnered considerable attention~\cite{zhou2022hyperbolic,choudhary2022hyperbolic}.
Hyperbolic geometry has been recognized as a continuous tree~\cite{krioukov2009curvature}, exhibiting properties such as low distortion and small generalization errors when modeling tree-like structured data~\cite{sarkar2011low,suzuki2021generalization1,suzuki2021generalization2}.
Its applications span various domains~\cite{yang2022hyperbolic,mettes2023hyperbolic,peng2021hyperbolic}, including computer vision~\cite{khrulkov2020hyperbolic,atigh2022hyperbolic,hsu2021capturing}, natural language processing~\cite{nickel2017poincare,nickel2018learning,sala2018representation,montella2021hyperbolic,kolyvakis2019hyperkg,bai2021modeling,chami2020low}, recommender systems~\cite{HyperML2020,yang2022hrcf,sun2021hgcf,wang2021hypersorec,yang2022hicf,chen2022modeling}, graph learning~\cite{gulcehre2018hyperbolic,zhang2021hyperbolic,hgcn2019,liu2019HGNN,liu2022enhancing,yang2021discrete,bai2023hgwavenet} and more~\cite{xiong2022hyperbolic}. In the graph learning domain, recent works~\cite{liu2019HGNN,hgcn2019,lou2020differentiating,lgcn,zhang2021hyperbolic,yang2022htgn,yang2021discrete,liu2022enhancing} have generalized graph neural networks to hyperbolic space and demonstrated impressive performance, particularly on tree-like data. Some studies~\cite{peng2020mix,wang2021mixed,zhu2020gil,sun2021self} propose learning graph in different embedding spaces or product spaces. 
Furthermore, researchers have also explored the use of ultrahyperbolic geometry for graph learning~\cite{xiong2022ultrahyperbolic,xiong2021pseudo,law2020ultrahyperbolic,law2021ultrahyperbolic}. However, many existing methods fail to consider the local heterogeneous structure of graphs, resulting in significant distortion and low-quality embeddings.

    \begin{figure*}[ht]
    \centering
    \includegraphics[width=0.82\textwidth]{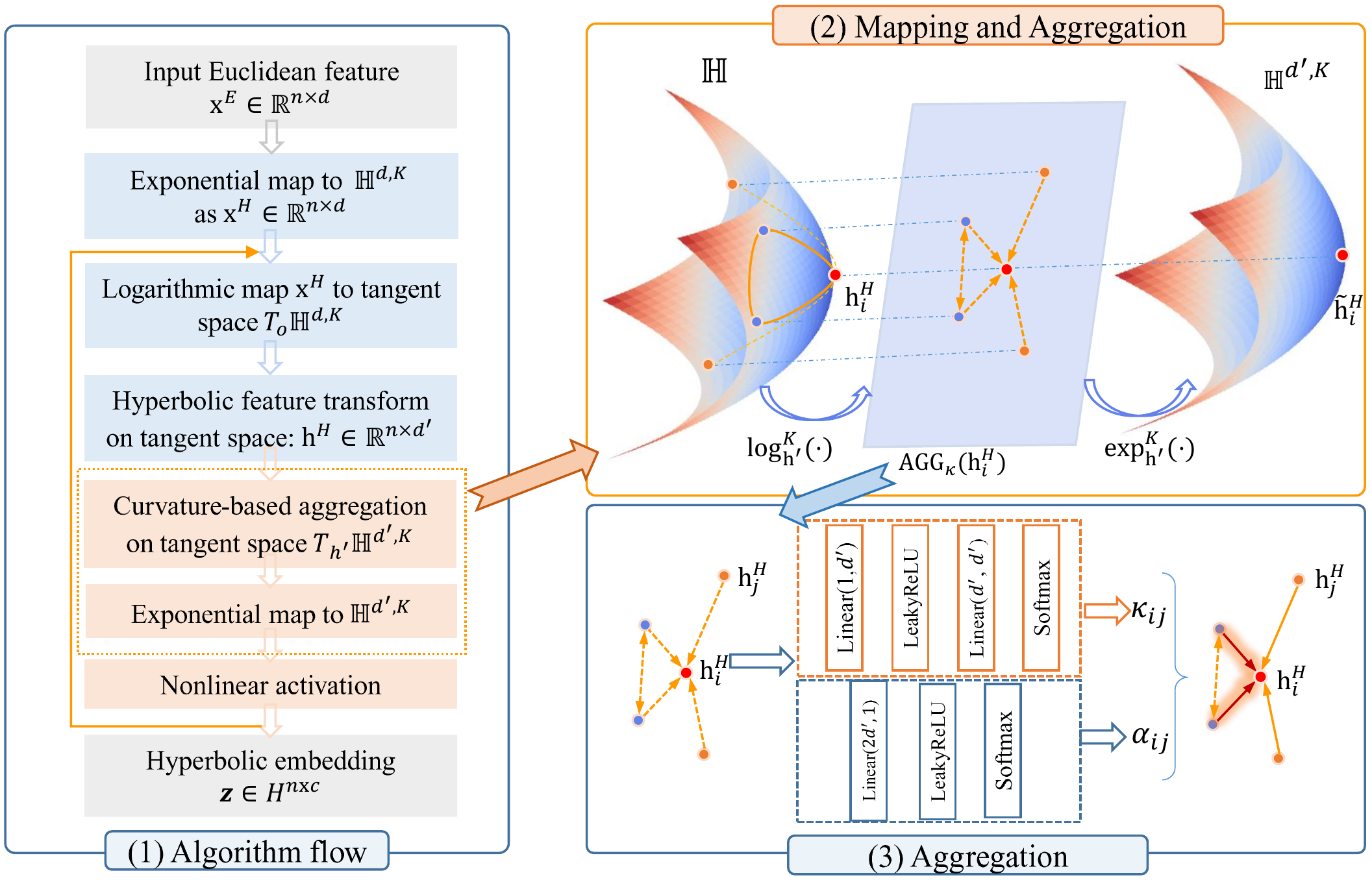}
    \small
    \caption{ Schematic of \method. (1) The simplified algorithm flow of our method: consists of hyperbolic projection, feature transform, and aggregation. After that, a readout layer is applied to the embeddings for either a node classification or link prediction task. (2) The visualization of neighborhood aggregation procedure: first project information to hyperbolic space for transformation, then map messages to the tangent space, perform the aggregation in the tangent space with the guide of discrete curvature (and attention), and then map back to the hyperbolic space. (3) The details of Ricci Curvature-aware aggregation and its combination with feature-based attention.}
    \label{fig.framework}
    \end{figure*}
    
\subsection{Graph Curvature}

    Graph curvature, resembling curvature in continuous geometry, includes Gaussian curvature, Ricci curvature, and average curvature. Each of these elements serves a distinct purpose: Gaussian curvature measures local curvature at vertices, Ricci curvature assigns curvature to edges, and average curvature offers a global measure for the entire graph. Applications of graph curvatures span various domains in network alignment, congestion and vulnerability detection, community detection, and robustness analysis~\cite{ni2018network,ni2015ricci,jonckheere2011congestion}. 
    The recent work of curvature graph neural network (CurvGN)~\cite{ye2019curvature} introduced the notion of Ricci curvature into the field of graph learning. 
    The study in~\cite{topping2021understanding} showed that edges with negative curvature can contribute to the over-squashing problem in graph embeddings. 
    Coinciding with the announcement of the accepted papers for WWW 2023, we noted a parallel work by Fu et al.~\cite{fu2023hyperbolic} that introduces the idea of class-aware Ricci curvature for addressing hierarchy-imbalance in node classification, while in our work, we aim to explore the integration of more generalized ricci curvature with hyperbolic graph convolution and curvature-based filtering mechanism to enhance the performance of HGCN for a more range of tasks, including node classification and link prediction.

\section{Background}
    In this part, we first briefly review the necessary definitions of differential geometry, primarily concentrating on hyperbolic geometry. A thorough and in-depth explanation can be found in~\cite{lee2018introduction}. We also give a short introduction about Ollivier Ricci curvature (ORC), which is a generalized Ricci curvature tailored for discrete objects (e.g., graphs)~\footnote{Our work is also applicable to other types of Ricci curvature.}. The readers may refer to~\cite{ollivier2009ricci} for more details.
    
	\subsection{Riemannian Geometry}
	\textbf{Manifold and Tangent Space.} Riemannian geometry, a subfield of differential geometry, denoted as $\mathcal{M}$ with a Riemannian metric $g$. An $n$-dimensional manifold $(\mathcal{M},g)$ represents a smooth and real space, essentially an extension of a 2-D surface to higher dimensions, that can be locally approximated by $\mathbb{R}^n$. For any point $\mathbf{x}$ on $\mathcal{M}$, we define a tangent space $\mathcal{T}_\mathbf{x}\mathcal{M}$, acting as the first-order approximation of $\mathcal{M}$ in the vicinity of $\mathbf{x}$. This tangent space is an $n$-dimensional vector space that is isomorphic to $\mathbb{R}^n$. The metric $g$ on the Riemannian manifold designates a smoothly changing positive definite inner product $<\cdot,\cdot>:\mathcal{T}_\mathbf{x}\mathcal{M}\times \mathcal{T}_\mathbf{x}\mathcal{M}\to \mathbb{R}$ on this tangent space, thereby facilitating the definition of numerous geometric attributes, such as geodesic distances, angles, and curvature. 

    \textbf{Geodesics and Induced Distance Function.} For a curve $\gamma:[\alpha,\beta]\to \mathcal{M}$, the shortest length of $\gamma$, i.e., geodesics, is defined as $L(\gamma)=\int_\alpha^\beta\|\gamma^\prime(t)\|_g dt$. Then the distance of $\mathbf{u}, \mathbf{v} \in \mathcal{M}$, $d^\mathcal{M}(\mathbf{u},\mathbf{v})=\inf L(\gamma)$ where $\gamma$ is a curve that $\gamma(a)=\mathbf{u}, \gamma(b)=\mathbf{v}$. 
    
    \textbf{Maps and Parallel Transport.}
    The maps define the relationship between the hyperbolic space and the corresponding tangent space. 
    Given a point $\mathbf{x}\in \mathcal{M}$ and a vector $\mathbf{v}\in\mathcal{T}_\mathbf{x}\mathcal{M}$, a unique geodesic $\gamma:[0,1]\to\mathcal{M}$ exists, satisfying $\gamma(0)=\mathbf{x}, \gamma^\prime(0)=\mathbf{v}$. The exponential map, symbolized as $\exp_\mathbf{x}: \mathcal{T}_\mathbf{x}\mathcal{M} \to \mathcal{M}$, is defined such that $\exp_\mathbf{x}(\mathbf{v})=\gamma(1)$. Conversely, the logarithmic map, denoted as $\log_\mathbf{x}$, acts as the inverse of $\exp_\mathbf{x}$. Furthermore, the parallel transport $PT_{\mathbf{x}\rightarrow \mathbf{y}}:\mathcal{T}\mathbf{x}\mathcal{M}\to\mathcal{T}\mathbf{y}\mathcal{M}$ achieves the transportation from point $\mathbf{x}$ to point $\mathbf{y}$, while ensuring the preservation of the metric tensors.
    
    \textbf{Hyperbolic Models.}
    Hyperbolic geometry describes a curved space with negative curvature. There are several mathematically equivalent ways to model hyperbolic geometry that emphasize different properties, but our methods apply to hyperbolic geometry in general and are not limited to any particular model. Formulas for concepts such as distance, maps, and parallel transport are summarized in Appendix~\ref{appendix:sec:hyperbolic_geometry}.

    \subsection{Graph Curvature}
    Curvature is a fundamental concept in smooth spaces that has also generalized to discrete objects (e.g., graphs). 
    There are several distinct notions of discrete Ricci curvature for graphs or networks, such as  the Forman-Ricci curvature~\cite{forman2003bochner} and Ollivier-Ricci curvature~\cite{ollivier2009ricci}. Here we mainly focus on ORC since it is more geometrical~\cite{ollivier2009ricci,jost2014Ricci_triangles}. Another reason is ORC builds a bridge between continuous geometry and discrete structures~\cite{van2020ollivier,ache2019ricci}.
    
\begin{definition}[Ollivier-Ricci Curvature]
\label{def:curvature}
Let $G=(V, E)$ be a locally finite, connected, and simple graph (i.e., $G$ contains no multiple edges or self-loops), for any two distinct vertices $v_1, v_2$, the ORC of $v_1$ and $v_2$ is defined as
    \begin{equation}
    \label{equ:curv_1}
        \kappa(v_1,v_2)=1-\frac{W(m_{v_1}, m_{v_2})}{d(v_1,v_2)} \in(-2, 1),
    \end{equation}
    where $d(v_1, v_2)$ is the shortest path between $v_1$ and $v_2$ on graph $G$, $W(m_{v_1}, m_{v_2})$ is the Wasserstein distance (see Definition~\ref{def:wasserstein's distance}) between two probability measures (see Definition~\ref{def:Probability_measure}) $m_{v_1}$ and $m_{v_2}$.
	\end{definition}
	\begin{definition}[Wasserstein Distance]
    \label{def:wasserstein's distance}
    Let $m_1, m_2$ be two probability measures on $V$. The \textit{Wasserstein distance} $W(m_1, m_2)$ between $m_1$ and $m_2$ is given by
    \begin{equation}
    W(m_1, m_2)=\inf_{\pi_{i,j}\in\Pi}\sum_{v_i, v_j\in V}\pi_{i,j}(v_i, v_j)d(v_i, v_j),
    \label{equ:curv_2}
    \end{equation}
    where $\pi_{i,j}:V\times V\to[0,1]$ is a transport plan, i.e., the probability measure of the amount of mass transferred from $v_i$ to $v_j$. Then to seek an optimal transference plan ($\pi$) that is to minimize the total cost of moving from $v_i$ to $v_j$ such that for every $v_i,v_j$ in $V$ satisfying
    \begin{equation}
        \sum_{v_i\in V}\pi_{i,j}(v_i, v_j)=m_1;
        \sum_{v_j\in V}\pi_{i,j}(v_i, v_j)=m_2.
    \label{equ:curv_3}
    \end{equation}
    \end{definition}
    \begin{definition}[Probability Measure]
	\label{def:Probability_measure}
	Given $G=(V,E)$, for a vertex $v_i$ in $V$, denote $d_{v_i}$ the degree of $v_i$ and $N(v)$ the neighbors of $v$, for any $p\in [0,1]$, the probability measure $m_{v_i}$ on $V$ is defined as:
	\begin{equation}
    m_{v_i}=\left\{\begin{array}{ll}
    p, & \text { if } v=v_i \\
    \frac{1-p}{d_{v}}, & \text { if }( v_i) \in N(v). \\
    0, & \text { otherwise }
    \end{array}\right.
    \label{equ:curv_4}
	\end{equation}
   \end{definition}
    \paragraph{{\textbf{\color{blue!50!black}Geometric Intuition}}} 
    ORC seeks the most efficient transportation plan that preserves mass between two probability measures, which may be solved using linear programming. Intuitively, transporting messages between two nodes whose neighbors are highly overlapping, such as two nodes in the same cluster, is costless. On the other hand, if two nodes are situated in distinct groups or clusters, information flow between them is difficult.
    
\section{Methodology}
The proposed method, \method, combines discrete and continuous curvatures to improve tree-like modeling in hyperbolic space. Our approach emphasizes the \textit{strengthening} of message passing in nodes with high local graph curvature and the \textit{weakening} of message propagation in nodes with low local curvature. This curvature-guided approach enhances the formation of hierarchies and reduces the impact of structural incompatibility on the modeling process.

\subsection{\method}
\label{sec:agg_method}
Our approach, named \method, presents a novel curvature-aware hyperbolic graph network model, as depicted in \fig~\ref{fig.framework}. Building upon the foundation of HGCN~\cite{hgcn2019}, we implement graph convolution operations via the tangential method~\cite{hgcn2019,liu2019HGNN} space. However, it is worth mentioning that \method is flexible and can be applied to non-tangential methods as well~\cite{lgcn}. Similar to other GNN and HGNN models, \method also comprises three fundamental modules: hyperbolic feature transformation, curvature-aware neighbor aggregation, and non-linear activation.


    
	\noindent
	\subsubsection{Hyperbolic Feature Transformation}
	{Hyperbolic feature transformation} is formulated as:
	\begin{equation}
	\mathbf{h}_{i}^{\ell, \mathcal{H}}=\left(\mathbf{W}^{\ell} \otimes^{\kappa_{\ell-1}} \mathbf{x}_{i}^{\ell-1, \mathcal{H}}\right) \oplus^{\kappa_{\ell-1}} \mathbf{b}^{\ell},
	\label{equ:transformation}
	\end{equation}
	where  $\ell$ denotes the  $\ell$-th layer, $\mathbf{W}$ is the trainable matrix and $\mathbf{b}$ is the bias. $\mathbf{W} \otimes^{\kappa} \mathbf{x}:=\exp _{\mathbf{o}}^{\kappa}(\mathbf{W} \log _{\mathbf{o}}^{\kappa}(\mathbf{x}))$ and $\mathbf{x} \oplus^{\kappa} \mathbf{b}:=\exp^\kappa_{\mathbf{x}}(P T_{\mathbf{o}\rightarrow\mathbf{x}}^{\kappa}(\mathbf{b}))$ are matrix-vector multiplication and bias translation operations in hyperbolic space, respectively. The superscript $^\mathcal{H}$ denotes the hyperbolic feature.

\begin{figure}[!tp]
\centering
\includegraphics[width=0.35\textwidth]{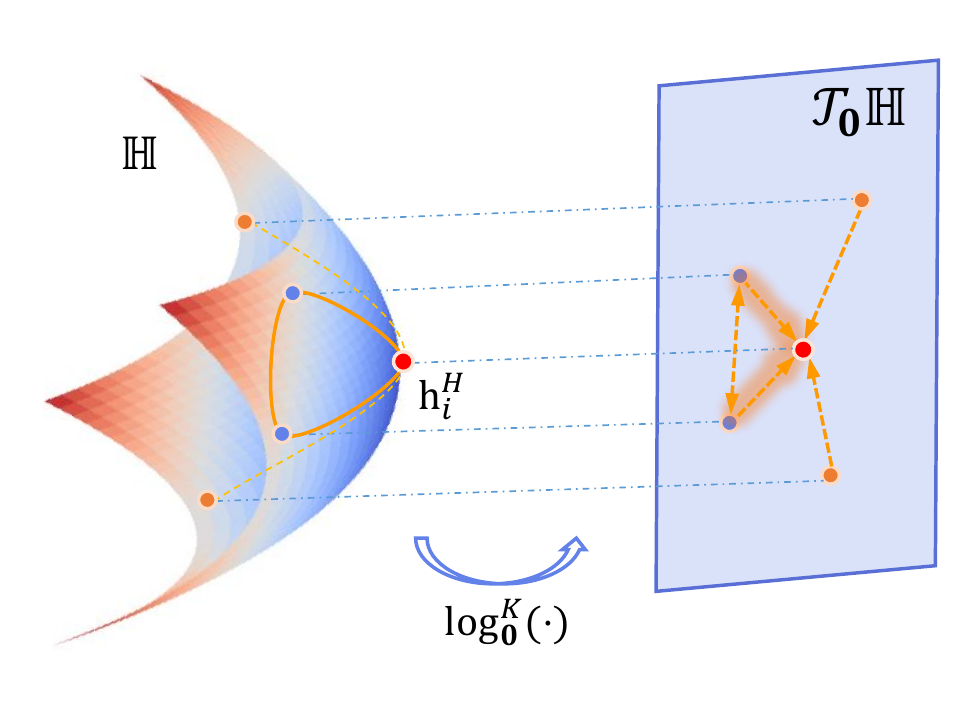}
\caption{Illustration of the proposed Curvature-aware aggregation. Compared with the traditional aggregation scheme, the proposed method further makes the aggregation process aware of the local structure.}
\label{fig:aggregation}
\vspace{-10pt}
\end{figure}

	\subsubsection{Curvature-Aware Neighbor Aggregation}
	{Curvature-Aware Neighbor Aggregation, as shown in {\color{black}Figure~\ref{fig:aggregation}}, is built in the tangent space of the origin:
	\begin{equation}
	{\tilde{\mathbf{h}}_{i}^{\ell, \mathcal{H}} 
	=\exp_{\mathbf{o}}^{\kappa_{l-1}}\left(\sum_{j\in \mathcal{N}_i}{\tilde{\kappa}_{i,j} \cdot\log_{\mathbf{o}}^{\kappa_{l-1}}(\mathbf{h}_j^{\ell,\mathcal{H}})}\right)}.
	\label{equ:kappa_aggregation}
	\end{equation}
	Here $\tilde{\kappa}_{i,j}$ denotes the curvature for capturing the local structure, computed by softmax operation within the neighbors $\mathcal{N}_i$ ($\mathcal{N}_i$ contains the node self):
	\begin{equation}
	\label{equ:softmax_kappa}
	\tilde{\kappa}_{i,j} =\textrm{softmax}_{j\in\mathcal{N}(i)}\left(\textrm{MLP}(\kappa_{i,j})\right),
	\end{equation}
    where $\kappa_{i,j}$ is the raw ORC value, and MLP (Multilayer Perceptron) is employed to make the curvature more adaptive to the overall negative curvature. This approach is referred to as {\sc Curv}.
    When it comes to the case where the topology information and node features are inconsistent to a certain degree, e.g., the network is quite sparse or depends more on the node features, inspired by~\cite{zhang2020adaptive}, we propose a feature-attention enhanced aggregation ({\sc CurvAtt}), which encodes node state into the curvature:
	\begin{equation}
	\label{equ:feature-enhanced aggregation}
	    	\tilde{\kappa}'_{i,j}=	\frac{w_{\kappa}\tilde{\kappa}_{i,j}+w_\alpha\alpha_{i,j}}{w_{\kappa}+w_\alpha},
	\end{equation}
	where $\alpha_{i,j}$ is hyperbolic feature attention, $w_{\kappa}$ and $w_{\alpha}$ are the trainable parameters that adjust structure information and feature correlation with the initial value $1.0$.
	The hyperbolic feature attention $\alpha_{i,j}$ is defined as:
	\begin{equation}
	{\alpha}_{i,j} =\mathrm{softmax}_{j\in\mathcal{N}_i}\left(\textrm{MLP}(\log_{\mathbf{o}}^\kappa(\mathbf{h}_i^\mathcal{H})\|\log_{\mathbf{o}}^\kappa(\mathbf{h}_j^\mathcal{H}))\right).
	\end{equation}
	\subsubsection{Hyperbolic Non-linear Activation}
	After that, we apply a nonlinear activation:
    \begin{equation}
	\mathbf{x}_{i}^{\ell, \mathcal{H}}=\sigma^{{\kappa_{\ell-1}, \kappa_{\ell}}}(\tilde{\mathbf{h}}^{\ell, \mathcal{H}})=\exp _{\mathbf{o}}^{\kappa_{\ell}}\left(\sigma(\log _{\mathbf{o}}^{\kappa_{\ell-1}}(\tilde{\mathbf{h}}^{\ell, \mathcal{H}}))\right).
	\label{equ:non_linear}
	\end{equation}


    \paragraph{{\textbf{\color{blue!50!black}Geometric Intuition}}}
    The real-world tree-like graphs with heterogeneous local structures are inevitably distorted if we directly embed them into a homogeneous manifold.
    For instance, the embedding of quasi-cycle graphs such as $n\times n$ square lattices (zero curvature) and $n$-node cycles (positive curvature) incur at least a multiplicative distortion of $O(n/\log n)$ in hyperbolic space~\cite{hyperbolic_distortion}. {Graph Ricci curvature is able to mitigate this distortion}. 
    The geometric intuition is that the more positive the curvature is, the more two distance balls centered at nearby points overlap, and therefore, the cheaper it is to transport the mass from one to the other. 
    Theoretical results show that with the increasing number of triangles involved in the linked pair $(i, j)$, the lower bound of curvature will be increased~\cite{jost2014Ricci_triangles}, as stated in Theorem \ref{theorem:lower_bound of ORC}.
    It is easy to understand because when the two vertices share many triangles, then the transportation distance should be smaller, and the curvature, therefore, is correspondingly larger. 
    \begin{theorem}[Lower bound of ORC~\cite{jost2014Ricci_triangles}]
        On a locally finite graph, for any pair of neighboring vertices i, j,
        let $\#(i, j):=$ number of triangles which include $i, j$ as vertices for $i \sim j$. Then, we have the inequality, saying that
        \begin{equation}
        \begin{aligned}
        \kappa_{i, j} \geq
        &-\left(1-\frac{1}{d_{i}}-\frac{1}{d_{j}}-\frac{\#(i, j)}{d_{i} \wedge d_{j}}\right)_{+} \\
        &-\left(1-\frac{1}{d_{i}}-\frac{1}{d_{j}}-\frac{\#(i, j)}{d_{i} \vee d_{j}}\right)_{+} 
        +\frac{\#(i, j)}{d_{i} \vee d_{j}},
        \end{aligned}
        \end{equation}
        where $s_{+}:= \max(s,0), s\vee t:= \max(s,t), \quad and\quad s\wedge t:= \min(s,t).$
        \label{theorem:lower_bound of ORC}
    \end{theorem}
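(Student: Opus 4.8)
The plan is to reduce the curvature bound to an optimal-transport estimate and then exhibit one explicit transport plan. Since $i\sim j$ we have $d(i,j)=1$, so Definition~\ref{def:curvature} collapses to $\kappa_{i,j}=1-W(m_i,m_j)$, and proving the stated lower bound on $\kappa_{i,j}$ is equivalent to proving the upper bound
\[
W(m_i,m_j)\le\Big(1-\tfrac{1}{d_i}-\tfrac{1}{d_j}-\tfrac{\#(i,j)}{d_i\wedge d_j}\Big)_{+}+\Big(1-\tfrac{1}{d_i}-\tfrac{1}{d_j}-\tfrac{\#(i,j)}{d_i\vee d_j}\Big)_{+}+1-\tfrac{\#(i,j)}{d_i\vee d_j}.
\]
Because the Wasserstein distance (Definition~\ref{def:wasserstein's distance}) is an infimum over admissible plans, it suffices to build one admissible $\pi$ and bound its cost from above; the infimum can only be smaller. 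The whole argument therefore comes down to designing a good plan and charging it carefully.

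\emph{Setup.} Assume without loss of generality $d_i\le d_j$, so $d_i\wedge d_j=d_i$ and $d_i\vee d_j=d_j$, and take the neighbour-uniform measures ($p=0$). I would partition the combined support into five groups: the vertex $i$ (carrying $1/d_j$ under $m_j$ and $0$ under $m_i$), the vertex $j$ (symmetric), the $\#(i,j)$ common neighbours (each carrying $1/d_i$ under $m_i$ and $1/d_j$ under $m_j$), the exclusive neighbours of $i$, and the exclusive neighbours of $j$. The single decisive idea is that the triangle mass stays put: at each common neighbour I leave $\min(1/d_i,1/d_j)=1/d_j$ untouched at cost $0$, which is exactly the frozen mass $\#(i,j)/(d_i\vee d_j)$ \emph{subtracted} from the total transported weight.

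\emph{The plan and its cost.} After freezing the overlap, the residual supply (surplus of $m_i$) sits at $j$, at the common neighbours, and at the exclusive neighbours of $i$, while the residual demand (surplus of $m_j$) sits at $i$ and at the exclusive neighbours of $j$. I would route as much as possible across single edges: exclusive neighbours of $i$ into the demand at $i$ (distance $1$), and $j$ into the exclusive neighbours of $j$ (distance $1$). A short computation shows the leftover supply on $i$'s side is exactly $A:=1-\tfrac1{d_i}-\tfrac1{d_j}-\tfrac{\#(i,j)}{d_i}$ and the leftover demand on $j$'s side is exactly $B:=1-\tfrac1{d_i}-\tfrac1{d_j}-\tfrac{\#(i,j)}{d_j}$, and these balance once the frozen-neighbour surplus $\#(i,j)(1/d_i-1/d_j)$ is included. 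The only remaining channels cross at distance at most $2$ (common neighbour to exclusive neighbour of $j$) or at most $3$ (exclusive neighbour of $i$ to exclusive neighbour of $j$); bounding each hop by $2$ and $3$ respectively and summing yields total cost $\le A_{+}+B_{+}+\bigl(1-\#(i,j)/d_j\bigr)$, the required estimate, after which $\kappa_{i,j}=1-W$ gives the theorem.

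\emph{Main obstacle.} The reduction and the ``keep the triangles fixed'' idea are clean; the real work is the bookkeeping of the residual mass and the truncations. I expect the hard part to be verifying that the distance-$1$ matching leaves residuals of exactly size $A$ and $B$, and then handling the regime where $A$ or $B$ turns negative (many triangles relative to the degrees): there the corresponding long-range moves vanish and the positive-part truncation $(\cdot)_{+}$ takes over, so the single cost formula above must be shown to dominate in every case. One must also explain why the two truncations carry $d_i\wedge d_j$ versus $d_i\vee d_j$ — the asymmetry stemming from $d_i\le d_j$ and from which endpoint a unit of overlap is measured against — and finally confirm admissibility, i.e.\ that $\pi$ satisfies the marginal constraints of Equation~\eqref{equ:curv_3} and never transports mass beyond distance $3$.
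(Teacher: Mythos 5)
You should know at the outset that the paper contains no proof of this statement: Theorem~4.1 is imported verbatim from Jost and Liu~\cite{jost2014Ricci_triangles}, with only the citation given, so there is no in-paper argument to compare against. Judged on its own, your proposal takes the right (and essentially the original) route: since $i\sim j$ gives $d(i,j)=1$, the curvature bound reduces to an upper bound on $W(m_i,m_j)$, and because $W$ is an infimum over plans, exhibiting one admissible plan suffices. Your bookkeeping is in fact exact. With $d_i\le d_j$ and the idleness-free measure, freezing mass $\min(1/d_i,1/d_j)=1/d_j$ at each of the $\#(i,j)$ common neighbours leaves moving mass $1-\#(i,j)/d_j$; the distance-$1$ moves (exclusive neighbours of $i$ into the demand at $i$, supply at $j$ into the exclusive neighbours of $j$) leave residual supply exactly $A=1-\tfrac1{d_i}-\tfrac1{d_j}-\tfrac{\#(i,j)}{d_i}$ and residual demand exactly $B=1-\tfrac1{d_i}-\tfrac1{d_j}-\tfrac{\#(i,j)}{d_j}$, with $B=A+\#(i,j)\bigl(\tfrac1{d_i}-\tfrac1{d_j}\bigr)$, so the common-neighbour surplus balances the books as you assert; and in the regime $0\le A\le B$, charging that surplus at distance $2$ and the residual $A$ at distance $3$ gives total cost \emph{exactly} $A+B+1-\#(i,j)/d_j$, i.e.\ your bound holds with equality and there is no slack to lose. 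The asymmetry you worry about ($d_i\wedge d_j$ versus $d_i\vee d_j$) is automatic once $d_i\le d_j$ is fixed: $A$ lives on the small-degree side and $B$ on the large-degree side.

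The two items you defer are indeed the only remaining work, and both go through, though one needs a sharper statement than ``the long-range moves vanish.'' First, since $B-A\ge 0$, only three regimes occur ($0\le A\le B$, $A<0\le B$, $A\le B<0$), and in the truncated regimes the unmatched mass must be actively \emph{rerouted} along distance-$1$ edges, which exist precisely because common neighbours are adjacent to $i$ and $j$ itself is adjacent to $i$: for $A<0\le B$ the deficit $-A$ at $i$ is filled from the common-neighbour surplus at distance $1$ (possible since $B\ge 0$ means the surplus covers it), and for $A\le B<0$ every unit can be moved distance exactly $1$, giving cost $1-\#(i,j)/d_j$ and matching the fully truncated bound. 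Second, you correctly note the theorem presupposes $p=0$ in Definition~3.3 of the paper; for lazy measures $p>0$ the constants change, so that assumption should be stated as part of the theorem's hypotheses rather than ``without loss of generality.'' With these written out, your proposal is a complete and correct proof, and its primal, plan-based direction is the natural one here — Kantorovich duality (1-Lipschitz test functions) is only needed for the complementary \emph{upper} bounds on $\kappa$, such as Jost--Liu's clustering estimate, not for this lower bound.
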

        In this study, we make a theoretical analysis in Theorem~\ref{prop:ORC_embedding_distance}, which further demonstrates the relations of ORC and embedding distance, i.e., when a large curvature is involved within the linked node, the closer of their embedding distance, which thus mitigates the distortion.
    \begin{theorem}[Embedding Distance w.r.t ORC]
    \label{prop:relations of ORC}
        Let $(i, j)\in E$ be the linked pair, $\mathbf{h}_i, \mathbf{h}_j\in \mathbb{R}^d$ be the node state in the tangent space, $d_i$ be the degree of node $i$, and $D$ be the distance of node $i$ and node $j$ in the tangent space, that is
            \begin{equation}
                {D} = \|\mathbf{h}_i - \mathbf{h}_j\|,
            \end{equation}
        where $\|\cdot\|$ is the Euclidean norm. Define a large ORC as $\tilde{\kappa}_{i, j}>\max\{1/d_i, 1/d_j\}$ and a small ORC as $\tilde{\kappa}_{i, j}<\min\{1/d_i, 1/d_j\}$. Then, when the large ORC is involved, their embedding distance will get smaller if using \method. On the contrary, when the small ORC is involved, their embedding distance will get larger.
        \label{prop:ORC_embedding_distance}
    \end{theorem}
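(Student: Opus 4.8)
The plan is to carry out the entire argument in the tangent space $\mathcal{T}_{\mathbf{o}}\mathcal{M}$ at the origin, where by Eq.~\eqref{equ:kappa_aggregation} the aggregation is an ordinary Euclidean weighted average and the reported distance $D=\|\mathbf{h}_i-\mathbf{h}_j\|$ is measured, so the hyperbolic $\exp/\log$ maps play no role in the comparison. Writing $\tilde{\mathbf{h}}_i=\sum_{k\in\mathcal{N}_i}\tilde{\kappa}_{i,k}\mathbf{h}_k$ for the post-aggregation state of node $i$, the one structural fact I would lean on is that the softmax in Eq.~\eqref{equ:softmax_kappa} makes the weights a convex combination, $\sum_{k\in\mathcal{N}_i}\tilde{\kappa}_{i,k}=1$. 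First I would freeze $\mathbf{h}_j$, update only node $i$, and use this normalization to write the exact identity
\begin{equation}
\tilde{\mathbf{h}}_i-\mathbf{h}_j=\sum_{k\in\mathcal{N}_i,\,k\neq j}\tilde{\kappa}_{i,k}(\mathbf{h}_k-\mathbf{h}_j),
\end{equation}
in which the edge term $\tilde{\kappa}_{i,j}(\mathbf{h}_j-\mathbf{h}_j)$ cancels identically. The residual mass on the right-hand side is $1-\tilde{\kappa}_{i,j}$, so a triangle-inequality bound gives $\|\tilde{\mathbf{h}}_i-\mathbf{h}_j\|\le(1-\tilde{\kappa}_{i,j})\,\bar r_{i\setminus j}$, where $\bar r_{i\setminus j}$ is the weighted distance of the remaining neighbors to $\mathbf{h}_j$. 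This is monotonically decreasing in $\tilde{\kappa}_{i,j}$: the more curvature mass the edge $(i,j)$ carries, the less mass pulls $i$ away from $j$, and the closer the updated node lands.

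The threshold then emerges by comparing the curvature weight against the uniform, curvature-free baseline. A plain mean aggregation assigns weight $1/d_i$ to every neighbor of $i$, so the edge $(i,j)$ contracts more strongly under \method than under uniform aggregation precisely when $\tilde{\kappa}_{i,j}>1/d_i$, and less strongly when $\tilde{\kappa}_{i,j}<1/d_i$. Next I would symmetrize: repeating the identity with $i$ and $j$ exchanged gives the companion comparison of $\tilde{\kappa}_{j,i}$ against $1/d_j$ for the update of node $j$. Treating the processed edge curvature as shared across the pair, $\tilde{\kappa}_{i,j}=\tilde{\kappa}_{j,i}$, both one-sided updates contract the pair simultaneously iff the weight exceeds both baselines, i.e. $\tilde{\kappa}_{i,j}>\max\{1/d_i,1/d_j\}$, and both expand it iff $\tilde{\kappa}_{i,j}<\min\{1/d_i,1/d_j\}$. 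Composing the two monotone one-sided movements then yields $D'<D$ in the large-ORC regime and $D'>D$ in the small-ORC regime, which is the claim; relating the size of $\tilde{\kappa}_{i,j}$ to triangle count via Theorem~\ref{theorem:lower_bound of ORC} supplies the reading that triangle-rich (cyclic or flat) edges are the ones driven above the $1/d$ threshold.

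The main obstacle I expect is controlling the residual contribution of the \emph{other} neighbors $k\neq j$ (and $k\neq i$), since the clean cancellation removes only the edge term and the leftover sum $\sum_{k\neq j}\tilde{\kappa}_{i,k}(\mathbf{h}_k-\mathbf{h}_j)$ need not be aligned with $\mathbf{h}_i-\mathbf{h}_j$. To make the comparison with the uniform baseline exact rather than a mere bound, I would hold the non-edge weights fixed and vary only the edge weight $\tilde{\kappa}_{i,j}$ against its uniform value $1/d_i$, isolating the contraction/expansion to this single quantity; alternatively one can linearize, retaining only the edge-induced displacement $\tilde{\kappa}_{i,j}(\mathbf{h}_j-\mathbf{h}_i)$ and absorbing the remaining neighbor pull into a common-mode term that largely cancels in $\Delta_i-\Delta_j$ along the unit vector $(\mathbf{h}_i-\mathbf{h}_j)/D$. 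A secondary point requiring care is justifying the symmetric-curvature reduction $\tilde{\kappa}_{i,j}=\tilde{\kappa}_{j,i}$, which is exactly what lets one scalar be compared against both $1/d_i$ and $1/d_j$; in its absence, the statement should be read as the conjunction of the two one-sided comparisons.
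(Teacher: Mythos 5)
Your proposal is essentially correct and reaches the theorem's conclusion, but by a genuinely different route than the paper. The paper does not use the softmax normalization $\sum_{k}\tilde{\kappa}_{i,k}=1$ at all; instead it posits that, relative to the degree-based baseline, the curvature-aware update adds an incremental mass of the neighbor's state to each endpoint, $\tilde{\mathbf{h}}_i=\mathbf{h}_i+\alpha_i\mathbf{h}_j$ and $\tilde{\mathbf{h}}_j=\mathbf{h}_j+\alpha_j\mathbf{h}_i$ with $\alpha_i=\tilde{\kappa}_{i,j}-1/d_i$ and $\alpha_j=\tilde{\kappa}_{i,j}-1/d_j$ (both positive in the large-ORC regime), updates \emph{both} nodes simultaneously, and uses the exact rearrangement $\|(\mathbf{h}_i+\alpha_i\mathbf{h}_j)-(\mathbf{h}_j+\alpha_j\mathbf{h}_i)\|=\|(1-\alpha_j)\mathbf{h}_i-(1-\alpha_i)\mathbf{h}_j\|$ followed by the symmetrization $\alpha_i\approx\alpha_j=\alpha_{ij}$ to land on the multiplicative factor $D_l\approx(1-\alpha_{ij})D<D$, and dually $D_s\approx(1+\beta_{ij})D>D$ for small ORC. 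Your one-sided analysis buys something the paper's proof lacks: the identity $\tilde{\mathbf{h}}_i-\mathbf{h}_j=\sum_{k\neq j}\tilde{\kappa}_{i,k}(\mathbf{h}_k-\mathbf{h}_j)$ is exact, and if you perturb $\tilde{\kappa}_{i,j}$ while rescaling the residual weights proportionally (rather than ``holding them fixed,'' which would break normalization and the identity itself), you get the exactly monotone statement $\|\tilde{\mathbf{h}}_i-\mathbf{h}_j\|=(1-\tilde{\kappa}_{i,j})\|\mathbf{m}\|$ for a fixed residual barycenter direction $\mathbf{m}$; your explicit flagging of the shared-weight assumption $\tilde{\kappa}_{i,j}=\tilde{\kappa}_{j,i}$ is also a hypothesis the paper uses silently. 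What the paper's simultaneous decomposition buys, conversely, is that it sidesteps your weakest step: two one-sided contractions do not by themselves imply the simultaneous update contracts the pair. In one dimension, with $\mathbf{h}_i=0$, $\mathbf{h}_j=1$, $\tilde{\mathbf{h}}_i=1.5$, $\tilde{\mathbf{h}}_j=-0.5$, both one-sided distances equal $0.5<1$ yet $\|\tilde{\mathbf{h}}_i-\tilde{\mathbf{h}}_j\|=2$ --- the nodes overshoot past each other --- so ``composing the two monotone one-sided movements'' requires exactly the common-mode cancellation you sketch at the end, which is the analogue of the paper's $\alpha_i\approx\alpha_j$ approximation. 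Once that step is made, both proofs sit at the same level of rigor: each discards the residual pull of the other neighbors and relies on a symmetric-increment approximation, with yours making the convexity explicit where the paper implicitly treats the degree-based aggregation as leaving $\mathbf{h}_i$ in place.
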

    \begin{proof}
    In the following, we use ${D}_l$ and ${D}_s$ to denote the distance when large and small curvature $\tilde{\kappa}$ are involved, respectively. The main idea is that when there is a large curvature involved, the node distance will be decreased compared with the original case (degree-based aggregation), that is ${D}_l < {D}$. At the same time, when there is a small curvature involved, the node distance will increase, that is ${D}_s > {D}$.
    
    (1) When a large curvature (i.e., $\kappa_{i,j}>\max(1/d_{i},1/d_{j})$) is involved, more messages will be transferred, and we decompose the embedding, taking $\mathbf{h}_i$ as an example, into two components: one is from original $\mathbf{h}_i$ and another is the incremental parts from $\mathbf{h}_j$, then
    \begin{equation}
    \begin{aligned}
        {D}_l&=\|(\mathbf{h}_i+\alpha_{i}\mathbf{h}_j)-(\mathbf{h}_j+\alpha_{j}\mathbf{h}_i)\| \\
        &=\|(\mathbf{h}_i-\alpha_{j}\mathbf{h}_i)
        -(\mathbf{h}_j-\alpha_{i}\mathbf{h}_j)\|,
    \end{aligned}
    \end{equation}
    where $\alpha_{i}$ ($\alpha_{j}$) is the difference between $\tilde{\kappa}_{i,j}$ and $1/d_i$ ( $1/d_j$), i.e., $\alpha_{i}=\tilde{\kappa}_{i,j}-1/d_i$, $\alpha_{j}=\tilde{\kappa}_{i,j}-1/d_j$. Since $\tilde{\kappa}_{i,j}> \max(1/d_{i},1/d_{j})$, $\alpha_i$ and $\alpha_j$ are both positive. Let $\alpha_{ij}=\alpha_{i}\approx\alpha_{j}$, then
    \begin{equation}
    \begin{aligned}
        {D}_l&\approx\|(\mathbf{h}_i-\alpha_{ij}\mathbf{h}_i)
        -(\mathbf{h}_j-\alpha_{ij}\mathbf{h}_j)\|\\
        &=(1-\alpha_{ij})\|\mathbf{h}_i-\mathbf{h}_j\| \\
        &< {D}.
    \end{aligned}
    \end{equation}
    Then, we easily know that when large curvature is involved, the distance will be reduced and two nodes will be closer to each other. What's more, the larger the curvature, the closer the nodes are.
    
    (2) Similarly, when a small curvature ($\kappa_{i,j}<\min(1/d_{i},1/d_{j})$) is involved, fewer messages will be transferred, and we decompose the embedding, taking $\mathbf{h}_i$ as an example, into two components: one is from original $\mathbf{h}_i$ and another is the reduction parts of $\mathbf{h}_j$, that is
    \begin{equation}
    \begin{aligned}
        {D}_s&=\|(\mathbf{h}_i-\beta_{i}\mathbf{h}_j)-(\mathbf{h}_j-\beta_{j}\mathbf{h}_i)\| \\
        &\approx(1+\beta_{ij})\|\mathbf{h}_i-\mathbf{h}_j\| \\
        & >{D},
    \end{aligned}
    \end{equation}
    where $\beta_{i}$ is the difference between $1/d_i$ and $\tilde{\kappa}_{i,j}$, i.e., $\beta_{i}=1/d_i-\tilde{\kappa}_{i,j}$, $\beta_{j}=1/d_j-\tilde{\kappa}_{i,j}$. Both $\beta_i$ and $\beta_j$ are positive in that $\kappa_{i,j}<\min(1/d_{i},1/d_{j})$. Let  $\beta_{ij}=\beta_{i}\approx\beta_{j}$, then, we easily know that when small curvature is involved, the node pair will become more distant in the embedding space. What's more, the smaller the curvature, the more distant the nodes are.
    \end{proof}
    
    \subsection{Curvature-based Homophily Constraint}
        In the degree-based learning paradigm, like GCN~\cite{gcn2017}, the influence of a node on another node decays exponentially as their graph distance increases as shown by~\cite{subgraph_meta_learning}. The analysis in \cite{subgraph_meta_learning} is limited degree-based aggregation and Euclidean space. The hyperbolic message passing learning paradigm of \method also shows a similar phenomenon, which causes too much influence loss in long-term propagation. 
        Especially, if the paths consist of numerous connections to other nodes, the node influence is minimal. For clarity, we term the hyperbolic message passing learning paradigm in \method or original HGNNs~\cite{liu2019HGNN,hgcn2019,zhang2021hyperbolic} as \aggmethod. The \aggmethod is a local aggregation method, in which the influence of nodes decreases with increasing distance, as demonstrated in Theorem~\ref{theorem:decay}.
        
        \begin{theorem}[Decaying property of \aggmethod]
        \label{theorem:decay}
        Let $p$ be a path between node ${u}$ and node ${v}$, $d_{g}^*$ be the shortest distance between ${u}$ and ${v}$, let $C$ be a constant and $\mathbf{z}$ be the embedding on the tangent space. Consider the node influence $I_{{u},{v}}$ ($I_{u,v}=\|{\partial {\mathbf{z}_u}}{/\partial {\mathbf{z}_v}}\|$) from ${v}$ to ${u}$ using \aggmethod, $I_{{u}, {v}}\leq C\gamma^{d_{g}^*}(0<\gamma<=1)$. The condition for equality is $d_g^*=1$, and ${v}$ is the unique neighbors of node ${u}$, correspondingly $C=1, \gamma=1$.
        \end{theorem}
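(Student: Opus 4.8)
The plan is to exploit the fact that \aggmethod carries out every layer operation --- feature transformation, curvature-aware aggregation, and nonlinear activation --- in the tangent space at the origin through the $\log_{\mathbf{o}}/\exp_{\mathbf{o}}$ pair. I would first unroll the $L$-layer recursion directly in tangent coordinates, writing $\mathbf{z}_i^{(\ell)} = \sigma\!\left(\sum_{j\in\mathcal{N}_i}\tilde{\kappa}_{i,j}\,\mathbf{W}^{\ell}\mathbf{z}_j^{(\ell-1)}\right)$ up to the bias term, where $\mathbf{z}_i^{(\ell)}=\log_{\mathbf{o}}^{\kappa}(\mathbf{x}_i^{\ell,\mathcal{H}})$. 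The two hyperbolic maps survive only at the input and output of the network, so their Jacobians contribute a single bounded factor that can be absorbed into the constant $C$; the remaining computation is then a Euclidean-style influence analysis. This reduction is the conceptual core: it is precisely the tangential construction that makes the hyperbolic model amenable to the same decay argument used for degree-based Euclidean GNNs in~\cite{subgraph_meta_learning}.

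Next I would compute $I_{u,v}=\|\partial\mathbf{z}_u^{(L)}/\partial\mathbf{z}_v^{(0)}\|$ by the multivariate chain rule. Expanding the derivative across layers produces a sum over all length-$L$ walks $v=w_0\to w_1\to\cdots\to w_L=u$ with consecutive vertices adjacent, each walk contributing the matrix product $\prod_{\ell=1}^{L}\mathrm{diag}(\sigma'_{w_\ell,\ell})\,\tilde{\kappa}_{w_\ell,w_{\ell-1}}\,\mathbf{W}^{\ell}$. Passing to operator norms and using submultiplicativity, I would bound each per-step factor by a common quantity $\gamma$: the activation contributes $\|\mathrm{diag}(\sigma')\|\le L_\sigma\le 1$ for the standard Lipschitz nonlinearities, the transformation contributes $\|\mathbf{W}^{\ell}\|\le w_{\max}$, and --- the key point --- each aggregation weight satisfies $\tilde{\kappa}_{w_\ell,w_{\ell-1}}\le 1$ because the curvatures are \textsc{softmax}-normalized over $\mathcal{N}_i$ in Eq.~\eqref{equ:softmax_kappa} and are hence row-stochastic.

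The decay in $d_g^*$ then follows from a purely combinatorial observation: since $d_g^*$ is the shortest graph distance between $u$ and $v$, \emph{no} walk of length smaller than $d_g^*$ joins them, so every nonzero term in the chain-rule expansion is a product of at least $d_g^*$ per-step factors. Factoring out the common $\gamma^{d_g^*}$ and bundling the finitely many surviving walks together with the residual factors into the prefactor $C$ yields $I_{u,v}\le C\,\gamma^{d_g^*}$ with $0<\gamma\le 1$. The equality regime is the degenerate one: when $d_g^*=1$ and $v$ is the unique neighbor of $u$, the sum collapses to a single length-one walk whose aggregation weight saturates to $1$, and with an identity-like transformation this forces $\gamma=1$ and $C=1$.

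The hard part will be controlling the Jacobians of the hyperbolic $\exp_{\mathbf{o}}$ and $\log_{\mathbf{o}}$ maps rigorously. Unlike the tangential aggregation, these maps are point-dependent and not globally isometric, so I must verify that their operator norms stay uniformly bounded over the region visited by the embeddings rather than amplifying the per-step factor and destroying the contraction $\gamma\le 1$. A secondary subtlety is guaranteeing that the \textsc{softmax} row-stochasticity delivers the bound $\tilde{\kappa}_{w_\ell,w_{\ell-1}}\le 1$ \emph{uniformly} across all edges and all layers, including the feature-attention variant of Eq.~\eqref{equ:feature-enhanced aggregation}, which is a convex combination of two row-stochastic weightings and therefore preserves the bound. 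Once these two points are secured, the walk-counting step is routine.
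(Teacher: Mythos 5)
Your skeleton is the same as the paper's: reduce to the tangent space, expand $\partial\mathbf{z}_u/\partial\mathbf{z}_v$ by the chain rule into a sum over paths joining $u$ and $v$, observe that each path contributes a scalar product of softmax-normalized weights lying in $(0,1]$, lower-bound every path length by $d_g^*$ to factor out $\gamma^{d_g^*}$, and bundle the path count into $C$; your equality regime ($d_g^*=1$, $v$ the unique neighbor, weight saturating to $1$) is exactly the paper's. The one structural difference is scope: the paper strips the model down further than you do. Its proof explicitly discards the feature transformation and the nonlinearity along with the boundary $\log_\mathbf{o}/\exp_\mathbf{o}$ maps, and analyzes only the softmax-weighted aggregation $\mathbf{z}_u=\frac{1}{K_{uu}}\sum_{j\in\mathcal{N}(u)}\exp(\tilde{\kappa}_{u,j})\mathbf{z}_j$, so each per-path scalar $S(I_{p_i})$ is a product of row-stochastic weights and the contraction $0<\gamma\le 1$ is automatic.

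Your more ambitious version, which keeps $\mathbf{W}^{\ell}$ and $\mathrm{diag}(\sigma')$ inside every per-step factor, has a genuine gap at the step where you assert $0<\gamma\le 1$: the per-step bound you exhibit is $L_\sigma\, w_{\max}\,\tilde{\kappa}_{w_\ell,w_{\ell-1}}\le w_{\max}$, and nothing in the model forces $w_{\max}=\max_\ell\|\mathbf{W}^{\ell}\|\le 1$. Since the number of $\mathbf{W}^{\ell}$ factors grows with the walk length, you cannot absorb them into the prefactor $C$ the way you absorb the single input/output $\exp_\mathbf{o}/\log_\mathbf{o}$ Jacobians; with $w_{\max}>1$ the claimed decay in $d_g^*$ simply fails (and indeed influence in a trained network with large weights need not decay). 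To close this you must either add the hypothesis $\|\mathbf{W}^{\ell}\|\le 1$ with $1$-Lipschitz $\sigma$, or restrict the claim to the aggregation operator alone, which is what the paper silently does. Two smaller points: your uniform bound $\tilde{\kappa}_{i,j}\le 1$ from the softmax in Equation~(\ref{equ:softmax_kappa}) is fine, but for the feature-attention variant the combination $(w_\kappa\tilde{\kappa}_{i,j}+w_\alpha\alpha_{i,j})/(w_\kappa+w_\alpha)$ is convex only if the trainable $w_\kappa,w_\alpha$ stay nonnegative, which also needs to be assumed; and the equality case is looser than either of you states, since $\mathcal{N}_i$ contains the node itself, so a softmax over $\{u,v\}$ never actually puts weight $1$ on $v$ --- a wrinkle the paper's proof shares with yours.
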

        \begin{proof}
        Recall the aggregation rule in Equations~(\ref{equ:kappa_aggregation}) and (\ref{equ:softmax_kappa}) (similar to that in origin HGNNs), we focus on the aggregation in the tangent space and ignore the previous logarithmic map and the later exponential map since they are applied before and after the whole aggregation process, respectively. 
        Then for any node ${u}$ and ${v}$, the update rule in the tangent space can be formulated as:
        \begin{equation}
            \mathbf{z}_u= \sum_{j\in\mathcal{N}(u)}{\tilde{\kappa}_{u,j}\mathbf{z}_j} =\frac{1}{K_{uu}}\sum_{j\in\mathcal{N}(i)}\exp(\tilde{\kappa}_{u,j})\mathbf{z}_j,
        \end{equation}
        where 
        $K_{uu}={\sum_{j\in\mathcal{N}(u)}{\exp(\tilde{\kappa}_{u,j})}}.$\footnote{For original HGNNs, the $\kappa$ can be replaced with degree-based weight or attention-based weight.}
        By an expansion of node in the neighbor $\mathcal{N}(j)$, we have:
        \begin{equation}
            \mathbf{z}_u =\frac{1}{K_{uu}}\sum_{j\in\mathcal{N}(u)}\exp(\tilde{\kappa}_{u,j})*\frac{1}{K_{jj}}\sum_{k\in\mathcal{N}(j)}\exp(\tilde{\kappa}_{j,k}){\mathbf{z}}_k.
        \end{equation}
        We completely expand it:
        \begin{equation}
          \label{equ:z_u}
        \begin{aligned}
            \mathbf{z}_u =&\frac{1}{K_{uu}}\sum_{j\in\mathcal{N}(u)}\exp(\tilde{\kappa}_{i,j})*\cdots*\frac{1}{K_{oo}}\sum_{p\in\mathcal{N}(o)}\exp(\tilde{\kappa}_{o,p}){\mathbf{z}}_p.
        \end{aligned}
        \end{equation}
        Node influences $I_{u,v}$ of $v$ on $u$ in the message passing output is $I_{u,v}=\|{\partial {\mathbf{z}_u}}{/\partial {\mathbf{z}_v}}\|$, where the norm is any subordinate norm and the node influence measures how a change in $v$ passes to a change in $u$.
        By equation~(\ref{equ:z_u}), the node influence can be computed as:
        \begin{equation}
        \begin{aligned}
        &I_{u,v}=\|\frac{\partial \mathbf{z}_u}{\partial \mathbf{z}_v}\| \\
            &=\|\frac{\partial}{\partial \mathbf{z}_v}(\frac{1}{K_{uu}}\sum_{j\in\mathcal{N}(u)}\exp(\tilde{\kappa}_{i,j})*\cdots*{\frac{1}{K_{oo}}}\sum_{p\in\mathcal{N}(o)}\exp(\tilde{\kappa}_{o,p})\mathbf{z}_p)\|.
        \label{equ:partial_devs}
        \end{aligned}
        \end{equation}
        The partial derivative of the nodes in Equation ~(\ref{equ:partial_devs}) is zero if they are not on the path between node $u$ and $v$, and then the feature influence can be decomposed into the sum influence of all related paths. Suppose there are $n$ paths between $u$ and $v$, then
        \begin{equation}
        \label{equ:223}
        \begin{aligned}
            I_{u,v}=\left\|\frac{\partial}{\partial \mathbf{z}_v}\left( I_{p_1} + \cdots + I_{p_i} \cdots+ I_{p_n}\right)\right\|,
        \end{aligned}
        \end{equation}
        where 
        $$
        I_{p_i}=\underbrace{\frac{1}{K_{uu}}\exp(\tilde{\kappa}_{u,p_j^i})\cdots \frac{1}{K_{p_{n_i}^ip_{n_i}^i}}\exp(\tilde{\kappa}_{p_{n_i}^i,v})}_{S(I_{p_i})}\mathbf{z}_v.
        $$
        Note that, in Equation~(\ref{equ:223}), the scalar term $S(I_{p_i})$ ranges from $(0, 1]$ and all $I_{p_i}(1\leq i\leq n)$ have the term $m_v$, thus we separate $S(I_{p_i})$ and the rest derivative term and then uses the absolute homogeneous property, i.e., $\|\alpha M\|=|\alpha |\|M\|$
        \begin{equation}
        \begin{aligned}
            I_{u,v}
            &=\left|S(I_{p_1}) + \cdots + S(I_{p_i}) \cdots+ S(I_{p_n})\right|\|\frac{\partial z_v}{\partial z_v}\| \\
            &= \left|S(I_{p_1}) + \cdots + S(I_{p_i}) \cdots+ S(I_{p_n})\right| \\
            &\leq |n*\max(S(I_{p_i}))| \\
            &=|n*\gamma^{n^i}|\\
            &\leq |n*\gamma^{d_g^*}|\\
            &=C\gamma^{d_g^*},
        \end{aligned}
        \label{equ:23}
        \end{equation}
        where $d_g$ is the shortest path between $u$ and $v$, $d^*\leq n^i$ and $0<\gamma\leq 1$, thus the second inequality holds on in Equation~(\ref{equ:23}). For more generality, we use constant $C$ to denote the $n$. The condition for equality is if and only if $d_g^*=1$ and the $v$ is the unique neighbor of node $u$, i.e., $\gamma=1$ and $C=1$.
        \end{proof}

    \begin{table}[]
    \centering
    \caption{Comparisons of the abilities of models in terms of global tree-likeness modeling (Global), local heterogeneous structure learning (Local), and neighbor interaction (Neighbor) are indicated by $\checkmark$ for the presence of the ability and $\times$ for its absence.}
    \resizebox{\columnwidth}{!}{%
    \begin{tabular}{@{}lcccc@{}}
    \toprule
    Model Type                                    & Models   & Global & Local & Neighbor \\ \midrule
    \multirow{2}{*}{Shallow models}         & EUC      &  $\times$      &   $\times$    &   $\times$       \\
                                            & HYP      &   $\checkmark$     &    $\times$   &   $\times$       \\ \midrule
    \multirow{4}{*}{Euclidean GNN models}             & GCN      & $\times$      & $\times$     &    $\checkmark$      \\
                                            & GAT      & $\times$     &  $\times$     &     $\checkmark$     \\
                                            & SAGE     & $\times$    &   $\times$    &    $\checkmark$      \\
                                            & SGC      &  $\times$      &   $\times$    &  $\checkmark$        \\ \midrule
    \multirow{2}{*}{Curvature GNN models} & CurvGN   &    $\times$    &  $\checkmark$     &     $\checkmark$     \\
                                            & $\kappa$GCN &   $\checkmark$     &   $\times$    &    $\checkmark$      \\ \midrule
    \multirow{2}{*}{Hyperbolic GNN models}        & HGCN     &  $\checkmark$      & $\times$     &    $\checkmark$      \\
                                            & LGCN     &  $\checkmark$      & $\times$     &    $\checkmark$          \\ \midrule
            Curvature-aware HGNN model    & $\kappa$HGCN &  $\checkmark$      & $\checkmark$     &    $\checkmark$      \\ \bottomrule   
    \end{tabular}%
    }
    \label{tab:models-function}
    \end{table}
   
        Theorem~\ref{theorem:decay} shows that the node influence using \aggmethod exponentially decays as the shortest graph distance $d_g^*$ between two nodes increases. In other words, distant nodes in dense areas will have less interaction, even if they are in a dense connected area.
        To alleviate the phenomenon, we propose a Curvature-based Homophily Constraint (\hr) to enhance the connection within linked pairs. The basic idea is to push the embeddings of linked nodes closer if their ORC value is larger than a threshold. In this way, we can enforce disjoint node pairs in dense areas or clusters to have more influence on each other through their mutual neighbors, which is given by:
	\begin{equation} 
	\label{HGCN: lp_loss_fun}
    	\mathcal{L}_{\kappa hc_{+}} = -\frac{1}{|E_\kappa|}\sum_{(i,j)\in E_\kappa}\mathrm{log}~p(\mathbf{x}_{i}^{\ell, \mathcal{H}}, \mathbf{x}_{j}^{\ell, \mathcal{H}}),
	\end{equation}
	where $E_\kappa$ is the filtered edge set based on ORC threshold $\tau$\footnote{We select edges if their ORC value is larger than a threshold where edges can be constructed by multiple hop neighbors and the weight is added their curvature together based on their shortest distance.}; $p(\cdot)$ is the Fermi-Dirac function, indicating the probability of two hyperbolic nodes $(u, v)$ link or not, which is given by:
	\begin{equation}
	\label{equ:femi-dirac}
	p(\mathbf{x}_u, \mathbf{x}_v)=\left[\exp{(d_\mathcal{H}^2(\mathbf{x}_u, \mathbf{x}_v)-r) / t}+1\right]^{-1},
	\end{equation}
	and $d_\mathcal{H}(\mathbf{x}_u, \mathbf{x}_v)$ is the hyperbolic distance from $u$ to ${v}$, $r$ and $t$ is hyper parameters and we set it as previous work~\cite{hgcn2019}.  We also sample the same number of negative link pairs that they have no connections or the curvature is very small based on the results in~\cite{topping2021understanding}. Totally, 
 \begin{equation}
          \mathcal{L}_{\kappa hc}=\mathcal{L}_{\kappa hc_{+}} + \mathcal{L}_{\kappa hc_{-}}.
 \end{equation}
   

        \paragraph{\textbf{\color{blue!50!black}Geometric Intuition}} \aggmethod helps build the connection between the graph topology and the embedding space, adjust the curvature of the hyperbolic geometry, and guide the information flow. It also shortens the distance of two linked nodes in an area with many triangles, helping mitigate the distortion caused by hyperbolic space. Nonetheless, \aggmethod is local inherently, and the proposed \hr further enhances the interactions of unconnected nodes, which is non-local.



    \subsection{\method Architecture}
    Given the Euclidean feature $\mathbf{x}^E$, we first project it into the hyperbolic manifold by the exponential map. \method architecture takes layers of \aggmethod as the encoder.  Following the literature, the Fermi-Dirac function is used as a decoder in the link prediction task.  For the node classification task, the final hyperbolic vector is mapped back to tangent space and decoded with MLP, which is the same with work~\cite{hgcn2019}.

 \begin{table}[h]
\centering
\caption{Statistics of the datasets}
\resizebox{0.45\textwidth}{!}{%
\begin{tabular}{@{}llcccc@{}}
\toprule
{\sc Dataset} & Nodes & Edges & Classes & Node features & Hyperbolicity$\delta$ \\ \midrule
{\sc Disease (NC)} & 1044  & 1043  & 2       & 1000          & 0             \\
{\sc Disease (LP)} & 2665  & 2664  & 2       & 1000          & 0             \\
{\sc Airport} & 3188  & 18631 & 4       & 4             & 1             \\
{\sc PubMed}  & 19717 & 88651 & 3       & 500           & 3.5           \\
{\sc Cora}    & 2708  & 5429  & 7       & 1433          & 11            \\ \bottomrule
\end{tabular}
}
\label{tab:statistics} 
\end{table}

\section{Experiments}
\begin{table*}[]
\centering
\begin{minipage}[h]{0.48\textwidth}
\caption{Profiling evaluation on node classification. F1-score with standard deviation for {\sc Disease} and {\sc Airport}; accuracy for others (the higher, the better).}
\resizebox{1.0\textwidth}{!}{%
\begin{tabular}{@{}lrrrr@{}}
\toprule
{\sc Dataset}                          & {\sc Disease} & {\sc Airport} & {\sc PubMed} & {\sc Cora} \\
Hyperbolocity ($\delta$)                                                & 0                              & 1                              & 3.5                           & 11                          \\ \toprule
EUC                                                     & $32.5\pm1.1$                   & $60.9\pm3.4$                   & $48.2\pm0.7$                  & $23.8\pm0.7$                \\
HYP~\cite{nickel2017poincare}                                                     & $45.5\pm3.3$                   & $70.2\pm0.1$                   & $68.5\pm0.3$                  & $22.0\pm1.5$                \\ \midrule
GCN~\cite{gcn2017}                                                     & $69.7\pm0.4$                   & $81.4\pm0.6$                   & $78.1\pm0.2$                  & $81.3\pm0.3$                \\
GAT~\cite{gat2018}                                                     & $70.4\pm0.4$                   & $81.5\pm0.3$                   & $79.0\pm0.3$                  & ${83.0}\pm0.7$       \\
SAGE~\cite{hamilton2017SAGE}                                                    & $69.1\pm0.6$                   & $82.1\pm0.5$                   & $77.4\pm2.2$                  & $77.9\pm2.4$                \\
SGC~\cite{wu2019simplifying}                                                     & $69.5\pm0.2$                   & $80.6\pm0.1$                   & $78.9\pm0.0$                  & $81.0\pm0.1$                \\ \midrule
CurvGN~\cite{ye2019curvature}                                                  & ${89.8}\pm2.9$                 & $84.7\pm1.5$                   & $78.3\pm0.3$                  & $82.0\pm0.9$                \\
$\kappa$GCN~\cite{bachmann2020constant}                                             & $82.1\pm1.1 $                  & $84.4\pm0.4$                           & $78.3\pm0.6$                  & $80.8\pm0.6$                \\ \midrule
HGCN~\cite{hgcn2019}                                                    & $74.5\pm0.9$                   & ${90.6}\pm0.2$                 & ${80.3}\pm0.3$                & $79.9\pm0.2$                \\
LGCN~\cite{lgcn}                                                    & $84.4\pm1.0 $                  & $ 90.9\pm1.0$                  & $78.8\pm0.5$                  & $\textbf{83.3}\pm0.5$                \\ \midrule
\textbf{\method(Ours)} & $\textbf{92.3}\pm1.4$          & $\textbf{92.8}\pm0.4$          & $\textbf{82.1}\pm0.5$         & ${82.5}\pm0.6$              \\ \midrule
$\Delta_E(\%)$                                          & $+31.1$                        & $+13.0$                        & $+3.9$                        & $-0.6$                      \\
$\Delta_\kappa(\%)$                                     & $+2.8$                         & $+9.6$                         & $+4.9$                        & $+0.6$                      \\
$\Delta_H(\%)$                                          & $+2.2$                         & $+3.6$                         & $+5.0$                        & $+5.4$                      \\ \bottomrule
\end{tabular}
}
\label{tab:nc}
\vspace{-10pt}
\end{minipage}
\hspace{10pt}
\begin{minipage}[h]{0.48\textwidth}
\caption{Profiling evaluation on link prediction. AUC scores with standard deviation are reported (the higher, the better). The best is bold.}
\centering
\resizebox{1.0\textwidth}{!}{%
\begin{tabular}{@{}lrrrr@{}}
\toprule
{\sc Dataset}                          & {\sc Disease} & {\sc Airport} & {\sc PubMed} & {\sc Cora} \\
Hyperbolicity($\delta$)                                                & 0                              & 1                              & 3.5                           & 11                          \\ \midrule
EUC                                                     & $60.9\pm3.4$                   & $92.0\pm0.0$                   & $83.3\pm0.1$                  & $82.5\pm0.3$                \\
HYP~\cite{nickel2017poincare}                                                     & $70.2\pm0.1$                   & $94.5\pm0.0$                   & $87.5\pm0.1$                  & $87.6\pm0.2$                \\ \midrule
GCN~\cite{gcn2017}                                                     & $64.7\pm0.5$                   & $89.3\pm0.4$                   & $91.1\pm0.5$                  & $90.4\pm0.2$                \\
GAT~\cite{gat2018}                                                     & $69.8\pm0.3$                   & $90.5\pm0.3$                   & $91.2\pm0.1$                  & $93.7\pm0.1$                \\
SAGE~\cite{graphsage}                                                    & $65.9\pm0.3$                   & $90.4\pm0.5$                   & $86.2\pm1.0$                  & $85.5\pm0.6$                \\
SGC~\cite{wu2019simplifying}                                                     & $65.1\pm0.2$                   & $89.8\pm0.3$                   & $94.1\pm0.0$                  & $91.5\pm0.1$                \\ \midrule
CurvGN~\cite{ye2019curvature}                                                  & $80.6\pm0.8$                   & $89.5\pm0.3$                   & $91.6\pm0.4$                  & $72.5\pm0.7$                \\
$\kappa$GCN~\cite{bachmann2020constant}                                             & $92.0\pm0.5 $                  &  $92.5\pm0.5$                          & $94.9\pm0.3$                  & $92.6\pm0.4$                \\ \midrule
HGCN~\cite{hgcn2019}                                                    & $90.8\pm0.3$                   & $96.4\pm0.1$                   & ${96.3}\pm0.0$                & $92.9\pm0.1$                \\
LGCN~\cite{lgcn}                                                    & ${96.6}\pm0.6$                 & ${96.0}\pm0.6$                 & ${96.6}\pm0.1$                & ${93.6}\pm0.4$              \\
\midrule
\textbf{\method(Ours)} & {$\textbf{96.7}\pm0.1$}      & {$\textbf{98.2}\pm0.1$}      & {$\textbf{96.7}\pm0.1$}     & {$\textbf{95.0}\pm0.1$}   \\ \midrule
$\Delta_E(\%)$                                          & $+27.2$                        & $+8.5$                         & $+2.8$                        & $+1.4$                      \\
$\Delta_\kappa(\%)$                                     & $+4.1$                         & $+6.2$                         & $+1.9$                        & $+2.6$                      \\
$\Delta_H(\%)$                                          & $+0.6$                         & $+0.2$                         & $+1.3$                        & $+1.1$                      \\ \bottomrule
\end{tabular}
}
\label{tab:lp}
\vspace{-10pt}
\end{minipage}
\end{table*}  
\subsection{Experimental Setup}
    \textbf{Datasets.} 
    The evaluation of our work utilizes several datasets, including {\sc Disease}, {\sc Airport}, and three benchmark citation networks, namely {\sc PubMed} and {\sc Cora}. While {\sc Disease} and {\sc Airport} exhibit a more hierarchical structure, the citation networks are less so, making them suitable for demonstrating the generalization capability of our proposal. In Table~\ref{tab:statistics}, we provide the data statistics and hyperbolicity metric that measures the tree-likeness of each graph. For further details, please refer to Appendix~\ref{appendix:datasets}.
    
    \textbf{Baselines.} We compare our proposed model with various baselines.
    (1) \emph{Shallow Euclidean and hyperbolic models}, including Euclidean embeddings (EUC) and Poincar{\'e} embeddings (HYP)~\cite{nickel2017poincare};
    (2) \emph{Euclidean GNN models}, i.e., GCN~\cite{gcn2017}, GraphSAGE (SAGE)~\cite{graphsage}, Graph Attention Networks (GAT)~\cite{gat2018},	Simplified Graph Convolution, (SGC)~\cite{wu2019simplifying};
    (3) \emph{Curvature GNN models}, including Curvature Graph Network (CurvGN)~\cite{ye2019curvature} which applies the discrete curvature in Euclidean model and ProdGCN which deploys GNNs to products of constant curvature spaces, both of them are close to our work;
    \emph{Hyperbolic GNNs}, including HGCN~\cite{hgcn2019}, HGNN~\cite{liu2019HGNN}, HGAT~\cite{zhang2021hyperbolic}, LGCN~\cite{lgcn}. Table~\ref{tab:models-function} presents the different features of the aforementioned models regarding their capabilities for perceiving both global tree-likeness modeling and local heterogeneous structure, as well as their interactional aptitude with regard to neighboring information.

\textbf{Experimental Details}
\emph{Data split}. We evaluate \method on both node classification and link prediction tasks. The data split is the same with the previous works~\cite{hgcn2019}. More specifically, in link prediction, we randomly split edges into 85\%, 5\%, 10\% for training, validation, and test sets, respectively. For node classification, we split nodes into 70\%, 15\%, 15\% for {\sc Airport}, 30\%, 10\%, 60\% for {\sc Disease}, and we use 20 labeled train examples per class for {\sc Cora}, and {\sc PubMed}. 
\emph{Implementation details}. We closely follow the parameter settings as HGCN~\cite{hgcn2019}, fix the number of embedding dimensions to 16 and then perform hyper-parameter search on a validation set over learning rate, weight decay, dropout, and the number of layers. We also adopt the early stopping strategies based on the validation set as \cite{hgcn2019}. For baselines, we mainly refer to the reported results in the literature, and for the inconsistent cases (such as different embedding dimensions in H2H-GCN), we re-implement their official code in similar experimental settings.
\emph{Evaluation metric}.
Following the literature, we report the F1-score for {\sc Disease} and {\sc Airport} datasets, and accuracy for the others in the node classification tasks. For the link predictions task, the Area Under Curve (AUC) is calculated. 

\subsection{Experimental Results}

We report the results of 10 random experiments\footnote{The results on Lorentz model are similar.}, including standard deviations in \tab~\ref{tab:nc} and \ref{tab:lp}, where the $\Delta_E, \Delta_\kappa, \Delta_H$ is the improvement of the proposed model \method over the Euclidean GNNs, Curvature-related GNNs and Hyperbolic GNNs, respectively.

\paragraph{Node Classification} 
The experimental results of node classification are summarized in \tab \ref{tab:nc}, where a lower hyperbolicity value corresponds to a more tree-like structure. The key findings are:
(1) Overall, the proposed model performs impressively, surpassing previous models on four out of five datasets. Specifically, hyperbolic models (e.g., HGCN, LGCN) perform substantially better on the more hyperbolic dataset (e.g., {\sc Disease}) than on the less hyperbolic dataset; Euclidean models (e.g., GCN, GAT) find more success on the less hyperbolic datasets (e.g., {\sc Cora}) than on the more hyperbolic dataset; whereas our model performs better on both datasets, which is consistent with our motivation, namely, that the graph can be better learned under the guidance of curvature. In addition, from the improvements of $\Delta_\kappa$, we discovered that CurvGN with discrete curvature and $\kappa$GCN with continuous curvature both perform worse than our method which validates the power of hyperbolic geometry and the curvature-aware learning.
(3) When it comes to {\sc Cora}, both hyperbolic models and the proposed \method fail to outperform Euclidean GAT~\cite{gat2018}, indicating Euclidean geometry is more suitable for modeling data with scarcely hierarchical structures. Nevertheless, it is noted that \method still outperforms well-known Euclidean GCN models, e.g., GCN~\cite{gcn2017}, SGC~\cite{wu2019simplifying}, SAGE~\cite{graphsage}. It is observed that the proposed method \method also helps to narrow down the gap between hyperbolic models and Euclidean GAT.

\paragraph{Link Prediction}
The experimental results of link prediction tasks are summarized in \tab~\ref{tab:lp}. In the link prediction task, we further have the following observations: (1) Compared with Euclidean counterparts, Our proposed \method, and other hyperbolic models have achieved better performance.  It is because hyperbolic space owns a larger embedding space, where the structural dependencies could be well preserved by the link prediction loss, providing more space or boundary for nodes to be well arranged; (2) In comparison with the advanced hyperbolic models, our model also obtains remarkable gains and refresh the records.

According to the above extensive experiments, we are safe to conclude that equipping ORC with hyperbolic geometry further improves its generalization ability, obtaining high-quality representations for both tree-like and non-tree-like structured data. This confirms our primary motivation that the curvature carries rich information which is beneficial for graph representation learning in the embedding manifold. 
Specifically, incorporating the structure information featured by ORC helps the models developed in a  continuous manifold with negative curvature to  perceive the role of each node, accelerating the learning procedure, and reducing the distortion for graph embedding of less hierarchical networks.

\subsection{Effectiveness of Aggregations}
\begin{table}[t]
\centering
\caption{Performance of different aggregation methods: curvature ({\sc Curv}) and feature-enhanced curvature ({\sc CurvAtt}).}
\resizebox{0.43\textwidth}{!}{%
\begin{tabular}{@{}lcccc@{}}
\toprule
{\sc Dataset}  & \multicolumn{2}{c}{NC} & \multicolumn{2}{c}{LP} \\
         & {\sc Curv}               & {\sc CurvAtt}            & {\sc Curv}             & {\sc CurvAtt}          \\ \midrule
{\sc Disease}  & $91.8\pm1.9$           & $\textbf{92.3}\pm1.4$           & $95.7\pm0.3$         & $\textbf{95.8}\pm0.1$         \\
{\sc Airport}  & $92.4\pm1.0$          & $\textbf{92.8}\pm0.8$            & $98.0\pm0.1$         & $\textbf{98.2}\pm0.1$         \\
{\sc PubMed}   & $81.2\pm0.1$           & $\textbf{82.1}\pm0.4$           & $96.5\pm0.1$         & $\textbf{96.7}\pm0.1$         \\
{\sc Cora}     & $82.3\pm0.6$           & $\textbf{82.5}\pm0.6$           & $94.9\pm0.3$         & $\textbf{95.0}\pm0.5$         \\ \bottomrule
\vspace{-20pt}
\end{tabular}%
}
\label{tab:aggregation}
\end{table}
In Section~\ref{sec:agg_method}, we introduce two tangential aggregation strategies: the curvature-based approach (denoted as {\sc Curv}) and the feature-augmented method (denoted as {\sc CurvAtt}). The performance of these two aggregation strategies is evaluated and reported in Table~\ref{tab:aggregation}. Our results show that the feature-augmented approach outperforms the curvature-based one in most cases, which can be attributed to the complex nature of real-world systems and the incongruities between node features and topology. The feature-augmented method offers a more flexible and adaptive way to synthesize information from various sources, thus resulting in improved performance.

\begin{figure}[!tp]
\centering
\includegraphics[width=0.40\textwidth]{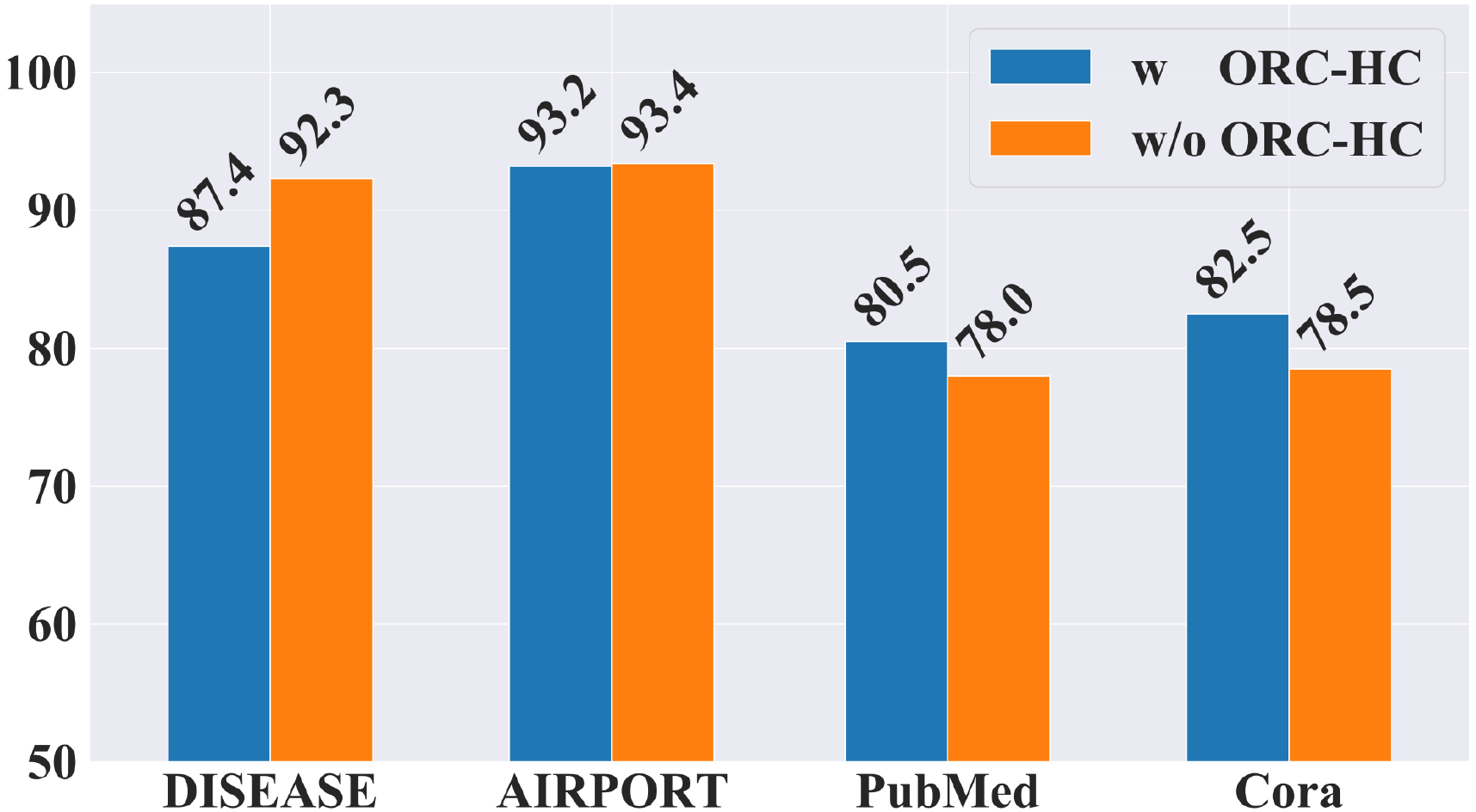}
\small
\caption{The performance of node classification by \method with \hr and without \hr.}
\label{fig:att_curv_lp_decoder}
\end{figure}

\subsection{Effectiveness of \hr} 
\label{sec:orc_hc_study}
Figure~\ref{fig:att_curv_lp_decoder}  displays the results of adding \hr or not on \method. As it observed, the performance degenerates substantially on DISEASE when applying \hr, while there are significant improvements on the three citation networks, i.e., PubMed, and Cora. 
This phenomenon can be understood as follows. Adding \hr as in node classification will force linked nodes to obtain more similar representations. For the pure tree-like dataset, i.e., DISEASE (without any triangle and circle), these node pairs belong to different levels, and adding \hr impairs the learning of asymmetric dependencies, which further affects the establishment of hierarchical awareness. When it comes to the citation networks instead, \hr helps to reduce the distortion caused by hyperbolic geometry and thus boost the learning.
\begin{figure}[!tp]
\centering
\begin{minipage}[t]{0.5\textwidth}
\centering
\includegraphics[width=6.8cm]{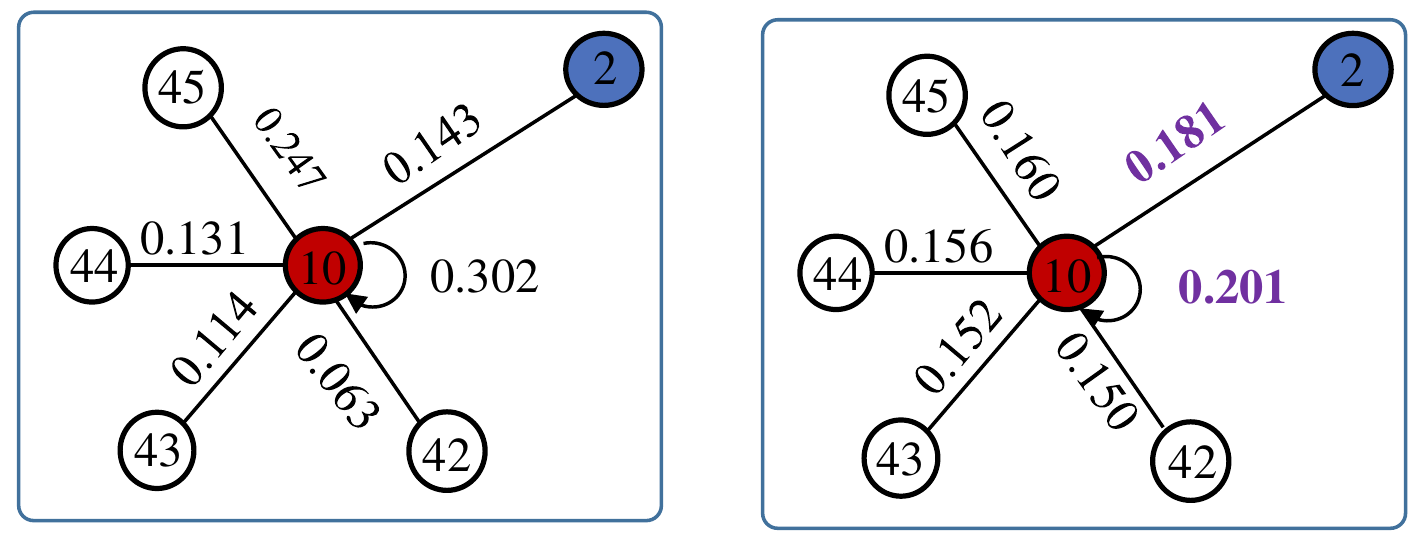}

\includegraphics[width=6.8cm]{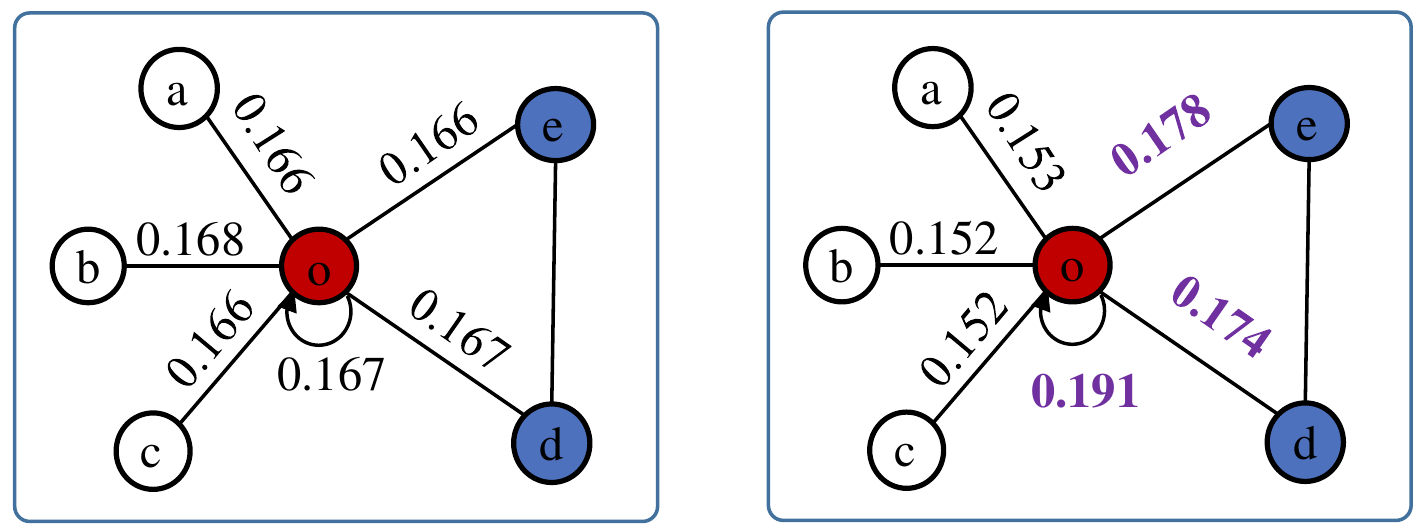}
\caption{A tree-like area (upper row) and a triangle-contained area (lower row) with edge weights by HGCN (left column) and \method (right column).}
\label{fig:case_study}

\end{minipage}
\vspace{-10pt}
\end{figure}

\subsection{Case Study of Ricci Curvature Weights}
In this section, we demonstrate the effectiveness of our method through a case study. We extract a subtree centered on a randomly sampled node (node 10, in this case), from the {\sc Disease} dataset, where node 2 is from a higher level and the remaining nodes (42, 43, 44, 45) belong to a lower level. The edges depict disease propagation paths. As shown in the upper sub-figures of Figure~\ref{fig:case_study}, we display the corresponding edge weights assigned by both \method and HGCN during the node classification task. The comparison between the two reveals that \method (upper right) effectively distinguishes node levels, as it assigns greater importance to the parent node (node 2) and equally emphasizes the child nodes from the same level, whereas HGCN (upper left) fails to make such distinctions. This substantiates the importance of ORC in facilitating hierarchical learning.

Furthermore, we illustrate a subgraph with triangles selected from the citation network, {\sc Cora}, as depicted in the lower two sub-figures of \fig~\ref{fig:case_study}. This subgraph comprises nodes ${o,e,d}$ that form a triangle. We examine the edge weights around node $o$ and present the results in the lower two sub-figures of \fig~\ref{fig:case_study}. Observing these results, it is evident that \method (as shown in the lower right) effectively identifies the local triangle structure and assigns larger weights to promote inter-node message exchange. In contrast, HGCN (as depicted in the lower left) fails to grasp the intricate topology in this area. These observations further attest to the efficacy of our proposed method in uncovering local clusters and mitigating the distortions imposed by hyperbolic geometry.


\section{Conclusion}
For modeling tree-like structures, the hyperbolic space has demonstrated its ability to capture hierarchical relationships. However, approximating a discrete tree-like graph with a hyperbolic manifold can result in inevitable distortions, as real-world tree-like graphs are inherently complex. In this work, we integrate the intrinsic graph structure into the continuous hyperbolic embedding space via the discrete Ricci curvature. 
As expected, the graph curvature facilitates the node to perceive the role and the hierarchy it belongs to, helping accelerate the hierarchical formation as well as alleviate the distortion in local clusters or cliques. 
The superiority of the proposal is demonstrated by extensive experiments. Curvature is a geometric notion with appealing and descriptive properties for both network and continuous space. Via the interaction of curvatures, we can build proper connections for a graph and the embedding space to obtain high-quality representations, which is  a promising direction to advance geometric learning. In the future, we will consider the use of Ricci flow, a more sophisticated geometric concept derived from Ricci curvature, to further enhance graph embedding and graph machine learning in tree-likeness modeling.

\section*{Acknowledgements}
We express gratitude to the anonymous reviewers and area chairs for their valuable comments and suggestions. This work was partly supported by grants from the National Key Research and Development Program of China (No.~2018AAA0100204) and the Research Grants Council of the Hong Kong Special Administrative Region, China (CUHK 14222922, RGC GRF, No. 2151185).
\newpage
\bibliographystyle{ACM-Reference-Format}
\bibliography{sample-base}
\appendix
\newpage
\section*{Appendix}

\section{Datasets}
\label{appendix:datasets}
In this section, we provide details about the datasets used in our study. The {\sc Disease} dataset contains nodes that are labeled as infected or not infected with a disease, with features indicating their susceptibility to the disease. The disease-spreading network in this dataset displays a clear hierarchical structure, which makes it ideal for testing the effectiveness of hyperbolic embedding models. We use this dataset to validate our proposal. The {\sc Airport} dataset consists of nodes representing airports, with edges indicating the existence of routes between two airports and labels reflecting the population of the respective country that the airport belongs to. In contrast, the {\sc PubMed} and {\sc Cora} datasets represent scientific papers as nodes, with edges indicating citations and labels corresponding to academic subfields.
 Additionally, the {\sc Disease} and {\sc Airport} datasets have imbalanced node labels, rendering accuracy an inadequate measure of model performance. Therefore, we use the F1 score as a more suitable measure for imbalanced datasets. Conversely, for the remaining datasets with balanced node classes, we use accuracy to evaluate the models. Furthermore, note that according to the official code of HGCN\footnote{\url{https://github.com/HazyResearch/hgcn}}, the data statistics for {\sc Disease} in node classification and link prediction differ slightly, and we list them separately in Table~\ref{tab:statistics} for clarity.

Hyperbolicity $\delta$ is a metric that quantifies how closely a graph resembles a tree structure, with lower values of $\delta$ indicating greater tree-like characteristics. 
A $\delta$ value of 0 corresponds to a tree, while higher hyperbolicity values indicate a less tree-like structure. The relationship between hyperbolicity and tree-like characteristics has been established in various studies, including~\cite{adcock2013tree,narayan2011large,jonckheere2011congestion}.
 \begin{table*}[htp!]
    \centering
    \caption{Summary of operations in the Poincar{\'e} ball model and the Lorentz model ($\kappa<0$)}
    \resizebox{0.85\textwidth}{!}{%
    \begin{tabular}{lcc}
    \toprule
    & \textbf{Poincar{\'e} Ball Model} & \textbf{Lorentz Model (Hyperboloid Model)} \\ \midrule \midrule
        \textbf{Manifold}  & $\mathcal{B}_{\kappa}^{n}=\left\{\mathbf{x} \in \mathbb{R}^{n}:\langle \mathbf{x}, \mathbf{x}\rangle_{2}<-\frac{1}{\kappa}\right\}$ 
    &  $\mathcal{L}_{\kappa}^{n}=\left\{\mathbf{x} \in \mathbb{R}^{n+1}:\langle \mathbf{x}, \mathbf{x}\rangle_{\mathcal{L}}=\frac{1}{\kappa}\right\}$          \\
    \textbf{Metric}     &  $g_{\mathbf{x}}^{\mathcal{B}_{\kappa}}=\left(\lambda_{\mathbf{x}}^{\kappa}\right)^{2} \mathbf{I}_n \text { where } \lambda_{\mathbf{x}}^{\kappa}=\frac{2}{1+\kappa\|\mathbf{x}\|_{2}^{2}}$ & $g_{\mathbf{x}}^{\mathcal{L}_{\kappa}}=\eta, \text { where } \eta \text { is } I \text { except } \eta_{0,0}=-1$    \\
\textbf{Distance}   &$d_{\mathcal{B}}^{\kappa}(\mathbf{x}, \mathbf{y})=\frac{1}{\sqrt{|\kappa|}} \cosh ^{-1}\left(1-\frac{2 \kappa\|\mathbf{x}-\mathbf{y}\|_{2}^{2}}{\left(1+\kappa\|\mathbf{x}\|_{2}^{2}\right)\left(1+\kappa\|\mathbf{y}\|_{2}^{2}\right)}\right)$
    &  $d_{\mathcal{L}}^{\kappa}(\mathbf{x}, \mathbf{y})=\frac{1}{\sqrt{|\kappa|}} \cosh ^{-1}\left(\kappa\langle \mathbf{x}, \mathbf{y}\rangle_{\mathcal{L}}\right)$  \\
    \textbf{Logarithmic Map}    &$\log _{\mathbf{x}}^{\kappa}(\mathbf{y})=\frac{2}{\sqrt{|\kappa| \lambda^{\kappa}}} \tanh ^{-1}\left(\sqrt{|\kappa|}\left\|-\mathbf{x} \oplus_{\kappa} \mathbf{y}\right\|_{2}\right) \frac{-\mathbf{x} \oplus_{\kappa} \mathbf{y}}{\left\|-\mathbf{x} \oplus_{\kappa} \mathbf{y}\right\|_{2}}$              
    &$\log _{\mathbf{x}}^{\kappa}(\mathbf{y})=\frac{\cosh ^{-1}\left(\kappa\langle \mathbf{x}, \mathbf{y}\rangle_{\mathcal{L}}\right)}{\sinh \left(\cosh ^{-1}\left(\kappa\langle \mathbf{x}, \mathbf{y}\rangle_{\mathcal{L}}\right)\right)}\left(\mathbf{y}-\kappa\langle \mathbf{x}, \mathbf{y}\rangle_{\mathcal{L}} \mathbf{x}\right)$\\ 
    \textbf{Exponential Map}   & $\exp _{\mathbf{x}}^{\kappa}(\mathbf{v})=\mathbf{x} \oplus_{\kappa}\left(\tanh \left(\sqrt{|\kappa|} \frac{\lambda_{\mathbf{x}}^{\kappa}\|\mathbf{v}\|_{2}}{2}\right) \frac{\mathbf{v}}{\sqrt{|\kappa|\|\mathbf{v}\|_{2}}}\right)$
    & $\exp _{\mathbf{x}}^{\kappa}(\mathbf{v})=\cosh \left(\sqrt{|\kappa|}\|\mathbf{v}\|_{\mathcal{L}}\right) \mathbf{x}+\mathbf{v} \frac{\sinh \left(\sqrt{|\kappa|}\|\mathbf{v}\|_{\mathcal{L}}\right)}{\sqrt{|\kappa||| \mathbf{v}||_{\mathcal{L}}}}$ \\
    \textbf{Parallel Transport} &  $P T_{\mathbf{x} \rightarrow \mathbf{y}}^{\kappa}(\mathbf{v})=\frac{\lambda_{\mathbf{x}}^{\kappa}}{\lambda_{\mathbf{y}}^{\kappa}} \operatorname{gyr}[\mathbf{y},-\mathbf{x}] v $& $ P T_{\mathbf{x} \rightarrow \mathbf{y}}^{\kappa}(\mathbf{v})=\mathbf{v}-\frac{\kappa\langle \mathbf{y}, \mathbf{v}\rangle_{\mathcal{L}}}{1+\kappa\langle \mathbf{x}, \mathbf{y}\rangle_{\mathcal{L}}}(\mathbf{x}+\mathbf{y})$ \\
    \bottomrule
    \end{tabular}
    \vspace{-10pt}
    }
    \label{tab:operation}
    \end{table*}   


 \section{Hyperbolic Geometry}
 \label{appendix:sec:hyperbolic_geometry}
The geometry of a Riemannian manifold is defined by its curvature: elliptic geometry for positive curvature, Euclidean geometry for zero curvature, and hyperbolic geometry for negative curvature. In this study, we concentrate on the latter, i.e. hyperbolic geometry. There exist several equivalent hyperbolic models that exhibit diverse characteristics, yet are mathematically isometric. Our main focus will be on two extensively researched hyperbolic models.: the Poincaré ball model \cite{nickel2017poincare} and the Lorentz model (also known as the hyperboloid model) \cite{nickel2018learning}. Let $\| . \|$ be the Euclidean norm and  $\left\langle .,. \right \rangle_\mathcal{L}$ denote the Minkowski inner product, respectively. The two models are denoted by Definition~\ref{def:poincare} and Definition~\ref{def:lorentz}. A compilation of the formulas and operations associated with these models, such as distance, mapping, and parallel transport, is presented in Table~\ref{tab:operation}. These operations include M\"obius addition~\citep{ungar2007hyperbolic} denoted by $\oplus_{\kappa}$ and the gyration operator~\citep{ungar2007hyperbolic} denoted by $\operatorname{gyr}[.,.] v$.
    \begin{definition}[Poincaré Ball Model]
    The Poincaré Ball Model with negative curvature $\kappa$ is defined as a Riemannian manifold $(\mathcal{B}_{\kappa}^{n},g_{\mathcal{B}})$, where $\mathcal{B}_{\kappa}^{n}$ is an open $n$-dimensional ball with radius $1/\sqrt{|\kappa|}$, $\mathcal{B}_{\kappa}^{n}=\left \{\mathbf{x}\in \mathbb{R}^{n}:\| \mathbf{x} \|^2<-1/\kappa \right \}$. The metric tensor of this model is expressed as $g_\mathcal{B}=\lambda^2g_E$, where the conformal factor $\lambda=\frac{2}{1+\kappa|\mathbf{x}|^2}$ and $g_E=I_n$ represents the Euclidean metric. 
    \label{def:poincare}
    \end{definition}
    
\begin{definition}[Lorentz Model]
 The Lorentz model, also known as the hyperboloid model, is defined as a Riemannian manifold $(\mathcal{L}_{\kappa}^{n}, g_{\mathcal{L}})$, where $\mathcal{L}_{K}^{n} = \left \{  \mathbf{x}\in \mathbb{R}^{n+1}: \left \langle \mathbf{x},\mathbf{x} \right \rangle_{\mathcal{L}}=\frac{1}{\kappa} \right \}$. The metric tensor of the Lorentz model is given by $g_{\mathcal{L}}=\text{diag}([-1,1,...,1])$, where $\kappa$ is the negative curvature constant.
 \label{def:lorentz}
\end{definition}

\section{More Analysis}
\subsection{Computational Complexity of ORC}
\begin{table}[!tp]
\centering
\caption{Computation cost of Ricci curvature where the unit of time is in seconds. NC: node classification; LP: link prediction.}
\resizebox{0.35\textwidth}{!}{%
\begin{tabular}{@{}lcccc@{}}
\toprule
{\sc Task} & {\sc Disease} & {\sc Airport} & {\sc PubMed} & {\sc Cora} \\ \midrule
NC (s)                      & 0.8                            & 2.7                           & 45.5                          & 2.3                         \\
LP (s)                      & 1.5                            & 2.2                           & 41.3                          & 2.1                         \\ \bottomrule
\vspace{-10pt}
\end{tabular}%
}
\label{tab:curvature_computing}
\end{table}
The computation of ORC is formulated as linear programming problems~\cite{cushing2019graph,loisel2014ricci}, and its computational complexity is $O(|E|d_{\max}^3)$, where $d_{\max}$ is the maximum degree of the graph. To illustrate the computational cost, we present the actual run-time on a machine with the environment {Intel(R) Xeon(R) Gold 6132 CPU @ 2.60GHZ} in \tab~\ref{tab:curvature_computing}. The results indicate that the time required for computing ORC is proportional to the size of the graph, and the cost for smaller graphs is correspondingly lower. Importantly, it is noteworthy that ORC only needs to be computed once prior to the training process, with the same computational complexity as HGCN~\cite{hgcn2019} during both training and inference. To handle extremely large-scale graphs, approximation methods such as Sinkhorn~\cite{cuturi2013sinkhorn} or Jaccard proxy~\cite{Jaccard2017efficient} may be utilized.

\subsection{Embedding Visualization}
\begin{figure}[ht]
	\centering
	\subfigure{
        \begin{minipage}[t]{0.36\linewidth}
        \centering
        \includegraphics[width=\linewidth]{./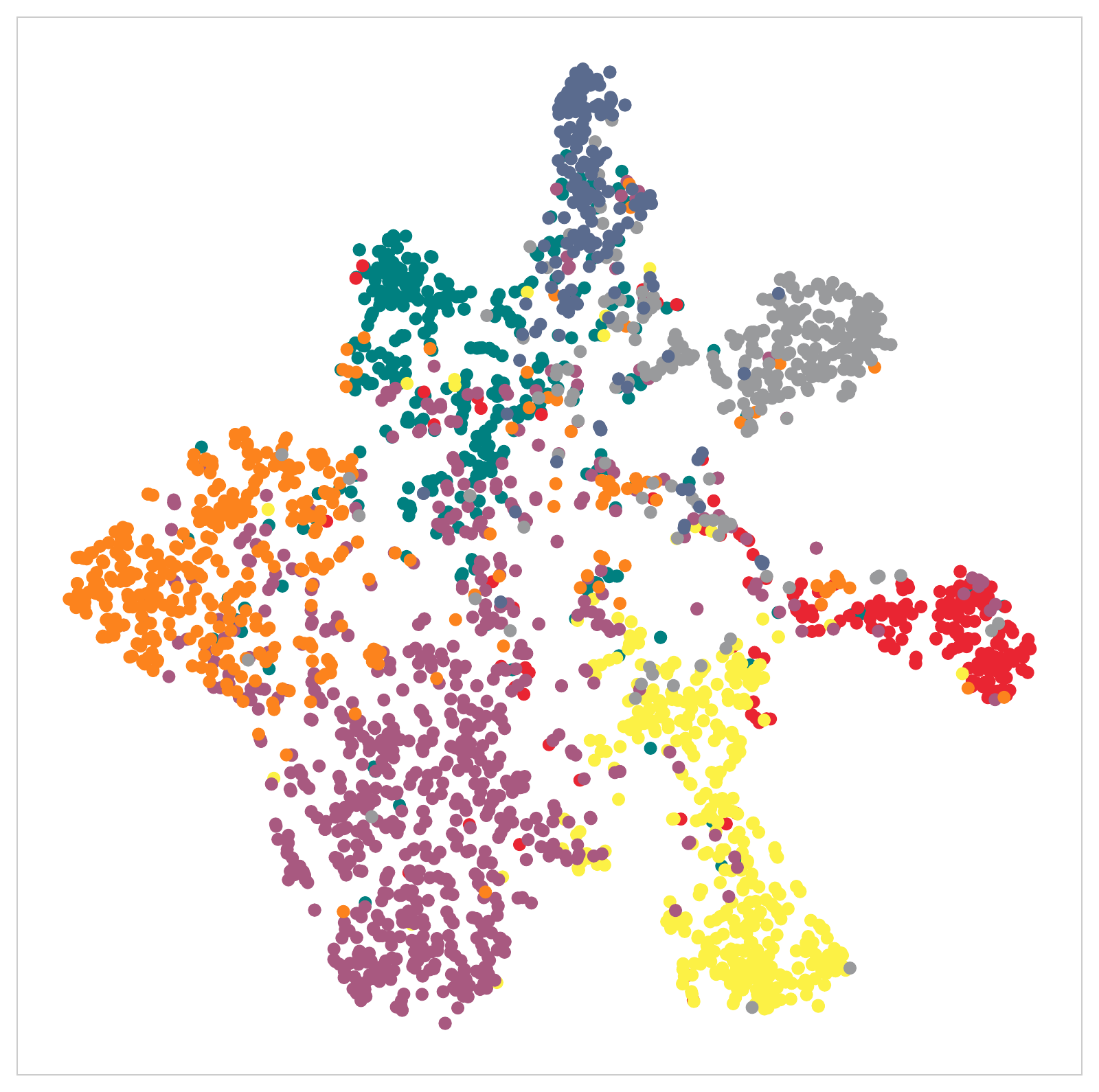}
        \end{minipage}%
        \hspace{30.5pt}
        \begin{minipage}[t]{0.36\linewidth}
        \centering
        \includegraphics[width=\linewidth]{./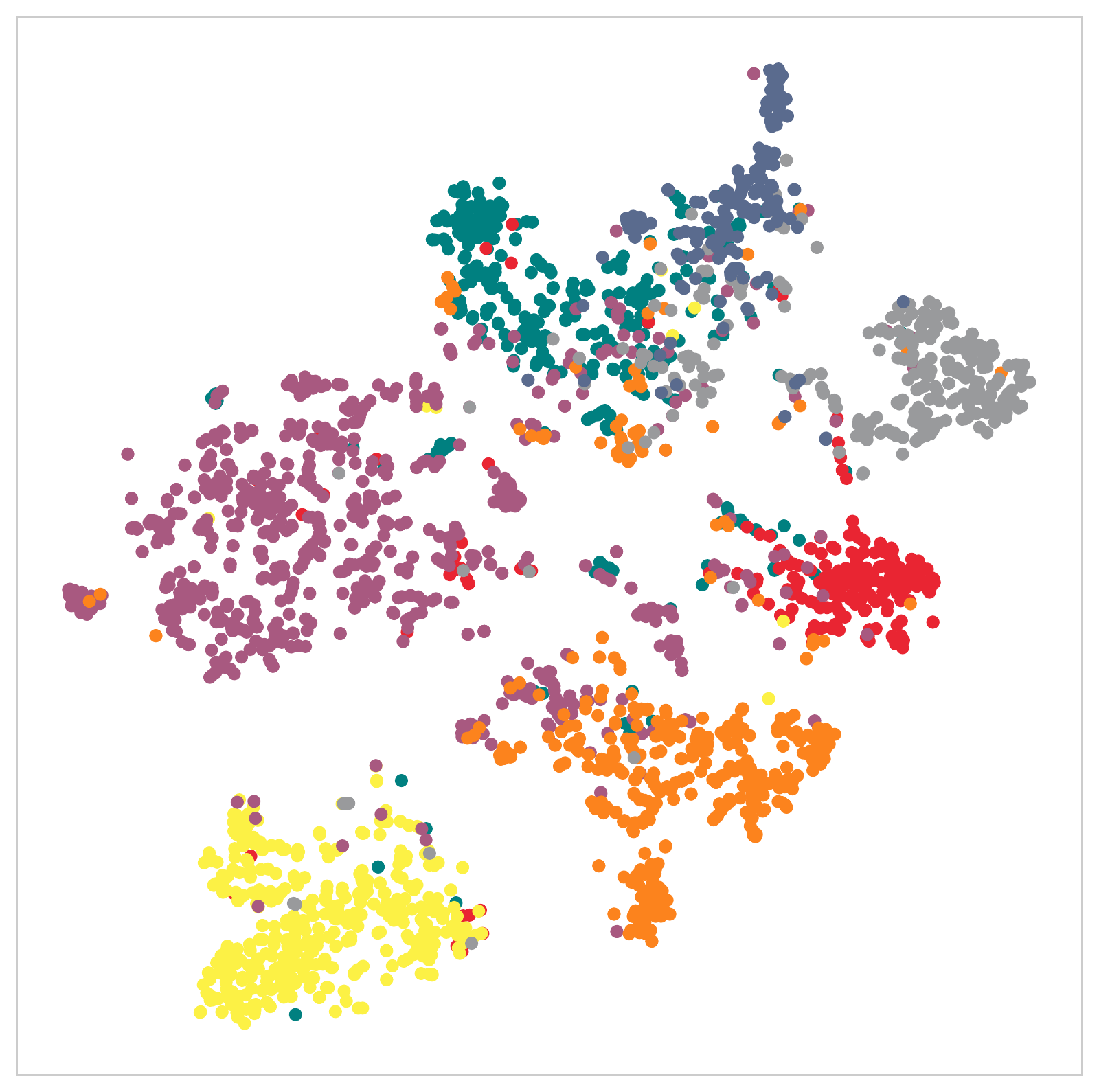}
        \end{minipage}%
        }
    \subfigure{
        \begin{minipage}[t]{0.36\linewidth}
        \centering
        \includegraphics[width=\linewidth]{./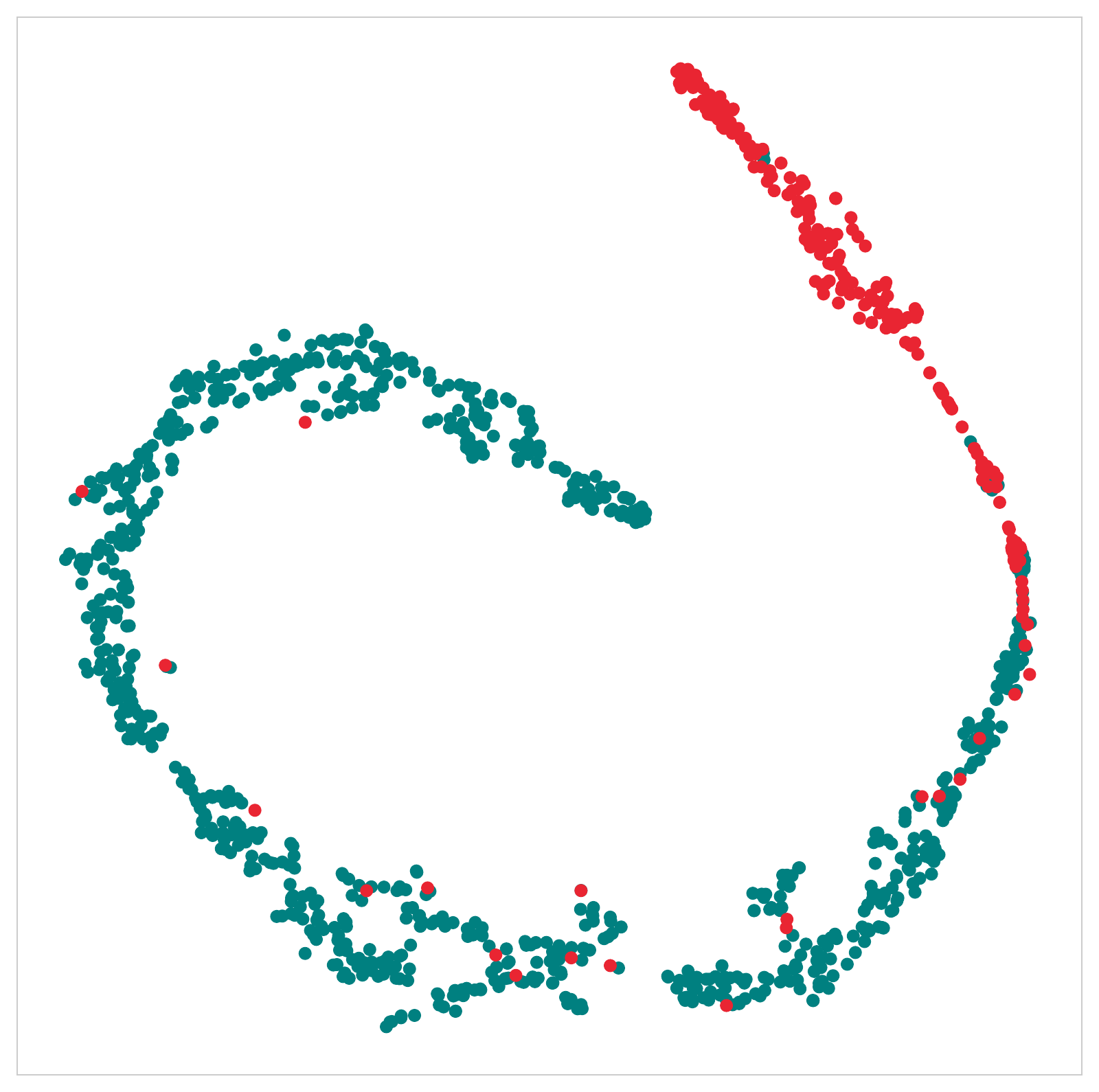}
        \end{minipage}%
        \hspace{30.5pt}
        \begin{minipage}[t]{0.36\linewidth}
        \centering
        \includegraphics[width=\linewidth]{./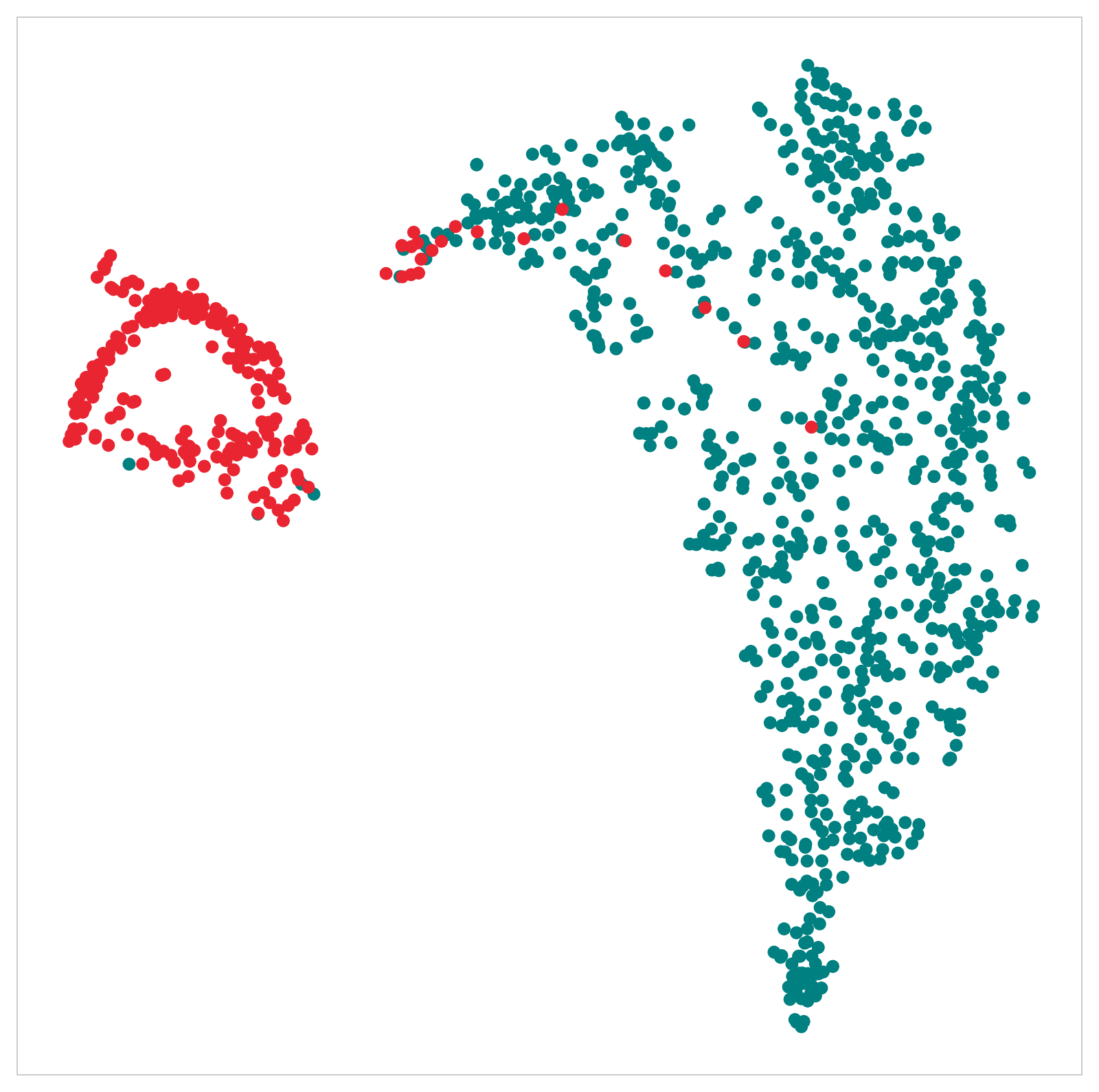}
        \end{minipage}%
    }%
    \small
	\caption{Visualization of Cora and DISEASE. \textbf{First row:} Embedding of Cora by HGCN (left) and \method (right)];~\textbf{Second row:} Embedding of DISEASE by HGCN (left) and \method (right)}
	\label{fig:visual}
	\vspace{-10pt}
    \end{figure}
The efficacy of \method and HGCN in learning representations for node classification is demonstrated through the visualization of their performance on the {\sc Disease} and {\sc Cora} datasets. To accomplish this, we employ the t-distributed Stochastic Neighbor Embedding (t-SNE) technique~\cite{maaten2008visualizing} to reduce the high-dimensional embeddings produced by the final layer of each model to a two-dimensional plane for visual examination. The results, shown in {Figure}~\ref{fig:visual}, depict nodes as individual points, where each point is assigned a color that corresponds to its class. The visualization indicates that the representations learned by \method exhibit sharper boundaries between different classes, thus showcasing the improved discriminative power of the proposed method compared to HGCN.

\subsection{Curvature-wise Performance}
The objective of the study is to perceive the local structure around nodes in tree-like graphs, encompassing local tree-like, zero-density, and densely connected structures. To demonstrate the effectiveness of the proposed method, we conducted further analysis by classifying nodes into defined local substructures. As an example, consider the following, we set the nodes to the following three types:
tree-like $\left(\kappa_i \geq-0.01\right)$; zero-like $\left(-0.01<\kappa_i \leq 0.01\right)$; positive-like $\left(\kappa_i>0.01\right)$.
The curvature of each node is defined as the sum of curvatures over its edges, that is:
$
\kappa_i = \frac{1}{\left|N_i\right|} \sum_{j \in N_i} \kappa_{ij},
$
where $\left|N_i\right|$ is the number of neighbors of node $i$, and $\kappa_{ij}$ is the curvature of the edge between nodes $i$ and $j$. This curvature measure was used to determine the local substructure of each node. In the following, we evaluated the performance of three models (GCN, HGCN, and the proposed method) on both the {\sc Cora} and {\sc Airport} datasets. Specifically, we calculated the accuracy/F1-score on the test set for nodes in each local substructure.

\begin{table}[!tp]
\centering
\caption{Curvature-wise performance on {\sc cora} dataset}
\begin{tabular}{cccc} 
\toprule
Method & Negative & Zero & Positive \\
\midrule
GT-Prop (\%) & $\mathbf{0.536}$ & $\mathbf{0.138}$ & $\mathbf{0.326}$ \\ \midrule\midrule 
GCN (\%) & 0.512 & $\mathbf{0.085}$ & 0.216 \\
HGCN (\%) & 0.523 & 0.077 & 0.213 \\
\method (\%) & $\mathbf{0.535}$ & 0.082 & $\mathbf{0.217}$ \\
\bottomrule
\vspace{-10pt}
\end{tabular}
\label{tab:cora_results}
\end{table}

\begin{table}[ht]
\centering
\caption{Curvature-wise performance on {\sc Airport} dataset}
\begin{tabular}{cccc} 
\toprule
Method & Negative & Zero & Positive \\
\midrule
GT-Prop (\%) & $\mathbf{0.418}$ & $\mathbf{0.439}$ & $\mathbf{0.143}$ \\ \midrule \midrule
GCN (\%) & 0.277 & $\mathbf{0.424}$ & 0.118 \\
HGCN (\%) & 0.382 & 0.405 & 0.117 \\
\method (\%) & $\mathbf{0.395}$ & 0.410 & $\mathbf{0.123}$ \\
\bottomrule
\vspace{-20pt}
\end{tabular}
\label{tab:airport_results}
\end{table}

The results are shown in Table~\ref{tab:cora_results} and Table~\ref{tab:airport_results}. The "GT-Prop" row in the Tables show the proportion of nodes belonging to each local substructure, with the proportions summing to 1. The values in the GCN, HGCN, and proposed model rows represent the proportion of nodes in each substructure that were correctly predicted by the respective models. In other words, the table presents the accuracy of each model in representing the local environment for different types of nodes.
Overall, the study found that the models achieved comparable accuracy to the best achievable, but performance varied across different substructures. Specifically, Euclidean GCN outperformed HGCN on zero-density and densely connected areas, but underperformed on tree-like nodes. In contrast, HGCN showed improved accuracy for tree-like nodes, but lower accuracy on other substructures. The proposed model achieved a balance of performance across all substructures, with high accuracy for both tree-like and non-tree-like nodes.

\subsection{Connections of the Discrete and Continuous Curvatures} The discrete curvature $\kappa$ is computed in advance, which can be regarded as an edge weight. The continuous curvature $c$ of the predefined hyperbolic space is learnable and differentiable. For simplicity, we denote the learnable parameters of our model \method\ as $\theta$. Let us consider node classification as an example. If the ground-truth label of node $\mathbf{x}$ is $\mathbf{y}$ and the predicted label is $\bar{\mathbf{y}}$, then we have $\bar{\mathbf{y}} = \textrm{\method}(\mathbf{x}, \kappa, \theta, c)$ and the loss is given by $L({\mathbf{y},\bar{\mathbf{y}}})$. We can take the derivative of $c$ with respect to the loss, i.e., $\frac{\partial L({\mathbf{y},\bar{\mathbf{y}}})}{\partial c}=\frac{\partial(\mathbf{y},\textrm{\method}(\mathbf{x}, \kappa, \theta, c))}{\partial c}$. It is easy to see that the update of $c$ is constrained by $\kappa$. In other words, we learn a good embedding space equipped with curvature $c$ that matches the graph structure through discrete Ricci curvature $\kappa$.

\end{document}